\documentclass{article}

\usepackage{microtype}
\usepackage{multirow}
\usepackage{xcolor}

\usepackage{graphicx}
\usepackage{subcaption}
\usepackage{booktabs} 

\usepackage{hyperref}

\usepackage[accepted]{icml2026}

\usepackage{amsmath}
\usepackage{amssymb}
\usepackage{mathtools}
\usepackage{amsthm}
\usepackage{subcaption} 

\usepackage[capitalize,noabbrev]{cleveref}

\theoremstyle{plain}
\newtheorem{theorem}{Theorem}[section]

\newtheorem{lemma}[theorem]{Lemma}

\theoremstyle{definition}

\theoremstyle{remark}

\usepackage{tcolorbox}
\usepackage{listings}
\tcbuselibrary{skins, breakable}
\definecolor{boxtitle}{gray}{0.9} 
\definecolor{boxframe}{gray}{0.6} 
\definecolor{boxback}{RGB}{250, 250, 250}
\definecolor{dimblue}{HTML}{B3D6E8}
\definecolor{dimred}{HTML}{F7B799}
\definecolor{darkred}{HTML}{67001F}
\definecolor{darkblue}{HTML}{053061}
\definecolor{lightblue}{HTML}{DAE8FC}
\definecolor{lightred}{HTML}{F8CECC}

\usepackage[textsize=tiny]{todonotes}

\icmltitlerunning{The Cylindrical Representation Hypothesis for Language Model Steering}

\begin{document}

\twocolumn[
  \icmltitle{The Cylindrical Representation Hypothesis for Language Model Steering}



  \icmlsetsymbol{equal}{*}
  \begin{icmlauthorlist}
    \icmlauthor{Lang Gao}{equal,mbzuai}
    \icmlauthor{Jinghui Zhang}{equal,mbzuai}
    \icmlauthor{Wei Liu}{nus}
    \icmlauthor{Fengxian Ji}{mbzuai}
    \icmlauthor{Chenxi Wang}{mbzuai}\\
    \icmlauthor{Zirui Song}{mbzuai}
    \icmlauthor{Akash Ghosh}{mbzuai}
    \icmlauthor{Youssef Mohamed}{mbzuai}
    \icmlauthor{Preslav Nakov}{mbzuai}
    \icmlauthor{Xiuying Chen}{mbzuai}
\end{icmlauthorlist}

\icmlaffiliation{mbzuai}{Department of Natural Language Processing, Mohamed bin Zayed University of Artificial Intelligence, Abu Dhabi, UAE}
\icmlaffiliation{nus}{Department of Computer Science, National University of Singapore, Singapore}

\icmlcorrespondingauthor{Xiuying Chen}{Xiuying.Chen@mbzuai.ac.ae}

  
  \icmlkeywords{Machine Learning, ICML}

  \vskip 0.3in
]
\printAffiliationsAndNotice{\icmlEqualContribution}

\begin{abstract}
Steering is widely used for controlling large language models, yet its effects are often unstable and difficult to predict. Existing theoretical accounts are largely based on the Linear Representation Hypothesis (LRH), which assumes that concepts can be orthogonalized for lossless control. However, this assumption rarely holds in practice and cannot explain the variability of steering outcomes.
We propose the \emph{Cylindrical Representation Hypothesis} (CRH), a geometric extension of LRH that relaxes the orthogonality assumption while preserving linear concept representations. We show that overlapping concept contributions naturally induce a sample-specific cylindrical structure consisting of a \textit{\textbf{central axis}}, a \textit{\textbf{normal plane}}, and \textit{\textbf{sensitive sectors}}. The central axis captures the primary semantic transition associated with a target concept, while the normal plane governs steering sensitivity. Within this plane, some sectors facilitate concept activation, while others suppress or delay it.
CRH reveals an asymmetry in steering predictability: the normal plane can be inferred from difference vectors, but the sensitive sectors cannot, introducing an intrinsic source of uncertainty. This explains why steering outcomes vary across samples even when intervention directions are well aligned.
Experiments spanning 100 concepts, multiple models, and diverse steering methods provide consistent evidence for the predicted cylindrical structure, suggesting that steering variability arises from representation geometry rather than imperfect steering vectors. Our code is available at \url{https://github.com/mbzuai-nlp/CRH}.
\vspace{-2mm}
\end{abstract}

\begin{figure}[!ht]
    \centering
    \includegraphics[width=\linewidth]{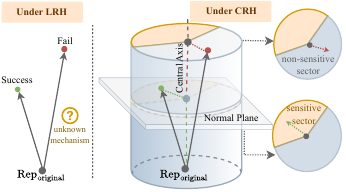}
    \caption{\textbf{Comparison of LRH and CRH.} 
LRH assumes a single global concept direction, while CRH reveals a sample-specific cylindrical structure with a central axis and an orthogonal normal plane. Steering outcomes depend on the sector within the normal plane, exposing the sample-specific nature of steerability.}
    \label{fig:intro}
    \vspace{-2mm}
\end{figure}

\section{Introduction}
As large language models (LLMs) become more capable, researchers have become increasingly interested in understanding their internal mechanisms and controlling their behavior in an interpretable way~\cite{singh2024rethinking}.
Steering has emerged as a common approach for this goal because it is simple, efficient, and works at inference time~\cite{rimsky2024steering,gao2025shaping}.
It adds a concept-related vector to internal representations to promote or suppress a target concept in the model's outputs.

However, practical steering effectiveness is often inconsistent across behaviors and individual inputs~\cite{tan2024analysing}. Existing accounts largely rely on the Linear Representation Hypothesis (LRH)~\cite{park2024linear}, which assumes that concepts are encoded linearly in the models and that independent concepts can be orthogonalized for lossless control. Based on this idealization, several methods attempt to estimate steerability using criteria such as representation separability~\cite{braun2025understanding}, treating higher separability as indicating purer concept extraction. In practice, these estimates remain controversial and do not consistently correlate with actual steering outcomes across different settings and datasets~\cite{bas2025actuallysteermultibehaviorstudy}. This suggests that lossless concept disentanglement is not achievable in realistic settings, and that steering unpredictability reflects a more fundamental underlying geometric mechanism rather than incidental noise.

Based on this finding, we propose the \textbf{Cylindrical Representation Hypothesis (CRH)}: representation differences arise from a linear combination of multiple, potentially non-orthogonal concepts, which induces a sample-specific axis-orthogonal geometry. 
As illustrated in Figure~\ref{fig:intro}, CRH provides a principled explanation for steering instability where LRH falls short. LRH assumes a single global concept direction, implying that any two steering vectors with similar angular alignment to the target should yield comparable success. In practice, however, such vectors can still produce different outcomes. Under CRH, steering is characterized by three elements: an \emph{axis}, a \emph{normal plane}, and \emph{sensitive sectors}. Even if two steering vectors are angularly close, their projections into the sample-specific normal plane may fall into different sectors. One may enter a sensitive sector that facilitates concept activation, while the other lands in a non-sensitive region that suppresses or delays it. This interaction reveals that steerability is intrinsically sample-specific and governed by the local phase within the cylindrical structure.

These components differ in predictability. When the latent concept composition of a sample is unknown, the magnitude of the normal-plane component can still be estimated from the axis-based decomposition, providing an approximate measure of steering intensity. However, the sensitive sector within the plane cannot be inferred from the same information. As a result, steering effects are highly sample-specific: overall steerability trend is observable, but the outcome for an individual sample remains unpredictable. This explains the difficulty of predicting steering behavior and the limitations of linear response assumptions.

We validate CRH through extensive verification experiments of 100 concepts, spanning multiple model architectures and steering implementations. Across all settings, a consistent cylindrical structure emerges, demonstrating that CRH robustly captures and explains the sample-specific behavior observed in practical steering.


In summary, our contributions are as follows:
\textit{(i)} We propose the Cylindrical Representation Hypothesis, which extends LRH by allowing overlapping concepts.
\textit{(ii)} We show that the cylindrical geometry explains sample-specific and irregular steerability.
\textit{(iii)} We empirically validate the cylindrical structure across diverse models and steering methods.

\section{Limitations of the Linear Representation Hypothesis}
\subsection{Background}
This section reviews the essential background needed to motivate our problem setting and clarify the context of our approach, focusing on steering methods and the Linear Representation Hypothesis. A broader discussion of related work is provided for reference in Appendix~\ref{app:rel_work}.

\textbf{Steering} modifies model outputs at inference time by adding a concept-related vector to internal representations, typically constructed from differences between positive and negative examples~\cite{rimsky2024steering}. Most existing steering methods rely on this linear intervention scheme and are commonly justified using LRH.

\textbf{LRH}~\cite{park2024linear} assumes that concepts correspond to linear directions in representation space, so that adding a concept vector can control model behavior. It further introduces \emph{causal separability}, which assumes that logically non-interfering concepts can be represented as orthogonal directions under a suitable ``causal inner product''. In this idealized setting, steering is expected to be stable and lossless. Formal definitions are deferred to Appendix~\ref{app:lrh}.

\begin{figure}[!t]
    \centering
    \includegraphics[width=\linewidth]{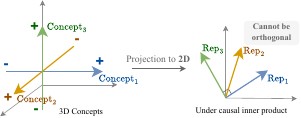}
    \caption{A counterexample of independent control. Three independent concepts defined in a three-dimensional latent space cannot remain orthogonal when represented in two dimensions. As a result, intervening on one concept inevitably affects others.}
    \label{fig:counterexp}
    \vspace{-2mm}
\end{figure}

\subsection{Steering Instability under LRH}

In practice, steerability can be unstable. Steering outcomes vary with the construction dataset, the target concept, and individual test samples~\cite{tan2024analysing}. Some work tries to estimate steerability in advance using linear criteria such as representation separability, where larger separation is taken to mean less overlap with other concepts~\cite{braun2025understanding}. However, these estimates do not always correlate well with actual steering outcomes~\cite{bas2025actuallysteermultibehaviorstudy}. Such approaches follow the LRH ideal settings, treating steering variability as reducible interference and interpreting higher separability as closer to independent control.

\textbf{A Counterexample for Ideal Lossless Control.}
We argue that this idealization cannot be fully achieved: Even when concepts are logically independent, interference between their representations is unavoidable. In a $d$-dimensional space, at most $d$ directions can be mutually orthogonal. Once the number of independent concepts exceeds $d$, representational overlap must occur. Figure~\ref{fig:counterexp} illustrates this with a simple example: 
Consider concepts of ``sign flips along the coordinate X/Y/Z'' of a three-dimensional space. Because they can change independently and correspond to orthogonal directions, they are causally separable. 
However, when represented in a two-dimensional space, no causal inner product can guarantee mutual orthogonality.

This counterexample shows that concept overlap is not merely a result of noise, but a fundamental geometric constraint. Consequently, irregular steering behavior should not be merely viewed as an accidental failure of vector quality. Instead, it reflects an inherent limitation of the LRH idealization. This motivates a refinement of LRH to realistic settings with unavoidable interference, enabling a more faithful modeling of steering mechanisms.

\section{Cylindrical Representation Hypothesis}
To overcome the limitations of LRH and explain the sample-specific nature of steerability, we introduce the Cylindrical Representation Hypothesis (CRH), a refinement of LRH that allows for interference between concepts.
It is named after the cylindrical structure that locally forms around each sample and directly shapes steering effectiveness.

\subsection{Preliminaries}

\textbf{Concepts.}
We denote a concept as a directed pair of semantically describable attributes of texts that can change independently, such as \texttt{male}$\Rightarrow$\texttt{female}.
This definition follows prior work where a concept is treated as an operational variable whose variation causally influences the generated output~\cite{rimsky2024steering,park2024linear}.

\textbf{Representation Space.}
To unify different steering implementations and to simplify the analysis that follows, we define an idealized \emph{output representation space}, in which each point causally corresponds to a specific model output.
Steering a point in this space, therefore, induces a corresponding change in the generated output. We conduct our subsequent analysis in this representation space.


\begin{figure*}[ht]
    \centering
    \includegraphics[width=0.85\linewidth]{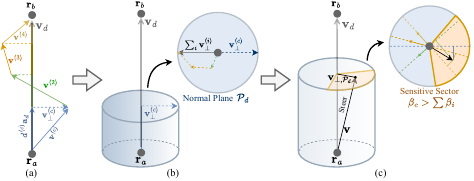}
    \caption{\textbf{Geometric structure induced by CRH.}
(a) Each concept vector $\mathbf{v}^{(i)}$ decomposes into components parallel and orthogonal to the difference vector $\mathbf{v}_d$, and $\mathbf{v}_d$ is a weighted sum of concept vectors that defines the central axis.
(b) The orthogonal components balance and form a sample-specific normal plane $\mathcal{P}_d$.
(c) A steering vector splits into an axis-aligned and a normal-plane component, whose phase determines whether steering enters a sensitive or a non-sensitive sector, influencing steering effects.}
    \label{fig:crh}
        \vspace{-4mm}
\end{figure*}

\textbf{Inherited and Extended Assumptions from LRH.}
CRH builds on LRH and inherits its basic representational assumptions.
Concepts are treated as fixed linear directions in the representation space, and manipulating these directions alters the corresponding concepts in the model's output.
Unlike LRH, CRH does not require logically independent concepts to be represented as orthogonal directions.

Full formulation and preliminaries are in Appendix~\ref{app:crh_detail}.


\subsection{Derivation of the Cylindrical Geometry}
\label{deriv}
\paragraph{Core Assumption: Overlapping Linear Contributions.}
CRH assumes that the semantic difference between two representations arises from the joint contribution of multiple concepts.
Formally, assume that the output representation space contains $n$ concept directions, each represented by a unit vector $\{\mathbf{a}^{(i)}\}_{i=1}^n \subset \mathbb{R}^d$.
For any two representations $\mathbf{r}_a, \mathbf{r}_b \in \mathbb{R}^d$, let us define the difference vector:
\begin{equation}
\mathbf{v}_d = \mathbf{r}_a - \mathbf{r}_b .    
\end{equation}
CRH posits that $\mathbf{v}_d$ can be expressed as a linear combination of these concept directions:
\begin{equation}\textstyle
\mathbf{v}_d = \sum_{i=1}^n \mathbf{v}^{(i)} = \sum_{i=1}^n \alpha^{(i)} \mathbf{a}^{(i)} ,
\label{eq:def}
\end{equation}
where $\alpha^{(i)} \in \mathbb{R}$ denotes the contribution of concept $i$ to the semantic difference between $\mathbf{r}_a$ and $\mathbf{r}_b$.

\paragraph{Axis–Orthogonal Decomposition Induced by CRH.}

Under CRH, the representation difference vector $\mathbf{v}_d $ naturally induces an axis-orthogonal decomposition of concepts, which forms the geometric basis of the later cylindrical interpretation of steering, as shown in Figure~\ref{fig:crh}(a). Specifically, $\mathbf{v}_d $ defines a central axis, while all orthogonal concept components are constrained to remain balanced.

Formally, we decompose each concept direction with respect to $\mathbf{v}_d$ using standard projection:
\begin{equation}
\mathbf{v}^{(i)}\textstyle
=\langle \mathbf{v}^{(i)},\mathbf{a}_d\rangle \mathbf{a}_d
+\Big(\mathbf{v}^{(i)}-\langle \mathbf{v}^{(i)},\mathbf{a}_d\rangle \mathbf{a}_d\Big).
\end{equation}
For brevity, we write:
\begin{equation}
\mathbf{v}^{(i)}=d^{(i)}\,\mathbf{a}_d+\mathbf{v}^{(i)}_{\perp},
\end{equation}
where $\mathbf{v}^{(i)}_{\perp}\perp \mathbf{a}_d$.
Substituting this decomposition into the CRH formulation Equation (\ref{eq:def})
yields
\begin{equation}
\mathbf{v}_d=\textstyle\sum_{i=1}^n \mathbf{v}^{(i)}
=\underbrace{\textstyle (\sum_{i=1}^n  d^{(i)})}_{\|\mathbf{v}_d\|}\mathbf{a}_d
+ \underbrace{\textstyle\sum_{i=1}^n \mathbf{v}^{(i)}_{\perp}}_\mathbf{0}.
\end{equation}
As a result, a balanced state emerges in the subspace orthogonal to $\mathbf{v}_d$, where the contributions of all concept components cancel each other out.

Now, consider steering toward a target concept $c$ for an original representation $\mathbf{r}\in\mathbb{R}^d$.
Let $\mathbf{r}_c$ denote a fluent output representation that expresses $c$, and define $\mathbf{v}_d=\mathbf{r}_c-\mathbf{r}$.
In this case, the orthogonal balance can be written as
\begin{equation}\textstyle
 \mathbf{v}^{(c)}_{\perp}
+\sum_{i\neq c} \mathbf{v}^{(i)}_{\perp}=\mathbf{0}.
\end{equation}
Thus, steering moves \( \mathbf{r} \) along \( \mathbf{v}_d \) toward \( \mathbf{r}_c \), where \( \mathbf{r}_c \) typically corresponds to a fluent output expressing concept \( c \).
We can view the summation term as constraints to \(\mathbf{v}_\bot^{(c)}\), ensuring the output coherence and stability.


\paragraph{A Sample-Specific Normal Plane.}
The orthogonal components of concepts $\{\mathbf{v}^{(i)}_{\perp}\}$ lie in a high-dimensional subspace, which is difficult to analyze. Therefore, we focus on a two-dimensional normal plane that summarizes the most important orthogonal variation for each sample, as illustrated in Figure~\ref{fig:crh}(b).
We define this normal plane $\mathcal{P}_d$ as
\begin{equation}\textstyle
   \mathcal{P}_d \;=\; \mathrm{span}\!\left( \mathbf{a}_{\perp}^{(c)},\ \mathrm{PC}_1\big(\{\mathbf{a}_{\perp}^{(i)}\}_{i\neq c}\big) \right),
\end{equation}
where $\mathrm{PC}_1(\cdot)$ denotes the first principal direction.
This choice keeps two complementary sources of variation.
The first direction, $\mathbf{a}_{\perp}^{(c)}$, represents the full orthogonal component of the target concept.
The second direction captures the dominant effects of all remaining concepts.
Let $\mathrm{Proj}_{\mathcal{P}_d}(\cdot)$ denote projection onto $\mathcal{P}_d$, and define
\begin{equation}
    \textstyle \mathbf{v}_{\perp,\mathcal{P}_d}^{(i)} \;=\; \mathrm{Proj}_{\mathcal{P}_d}\!(\mathbf{v}_{\perp}^{(i)}).
\end{equation}
Because the original orthogonal components balance each other, this balance is preserved after projection.
As a result, the projected components satisfy:
\begin{equation}
    \textstyle
    \sum_{i=1}^n \mathbf{v}_{\perp,\mathcal{P}_d}^{(i)} \;=\; \mathbf{0},
\end{equation}
which shows that the normal plane retains the essential cancellation structure among orthogonal concept contributions.

\paragraph{Phase and Sensitive Sectors in the Normal Plane.}
Steering along $\mathbf{v}_d$ represents the most suitable way to induce the target concept, since this direction points toward an output state that preserves semantic coherence while expressing the target concept. In this ideal case, the axis-aligned drive alone is sufficient to activate the concept.

In practice, however, steering vectors are estimated from multiple samples and typically deviate from this ideal direction. Such deviations lie in the normal plane $\mathcal{P}_d$ induced by $\mathbf{v}_d$, and they play a critical role in determining whether the axis-aligned drive can effectively produce the target concept.
We decompose a steering vector $v$ as
\begin{equation}
\label{eq:decomp}
\mathbf{v} = \mathbf{v}_{\text{axis}} + \mathbf{v}_{\perp,\mathcal{P}_d} + \boldsymbol{\epsilon},
\end{equation}
where $\mathbf{v}_{\text{axis}} \parallel \mathbf{a}_d,\;  \mathbf{v}_{\perp,\mathcal{P}_d} \in \mathcal{P}_d$. Here, $v_d$ captures the ideal axis-aligned component, $ \mathbf{v}_{\perp,\mathcal{P}_d}$ represents the deviation within $\mathcal{P}_d$, and $\boldsymbol{\epsilon}$ denotes a residual component that is weakly related to steering behavior.

Within the normal plane $\mathcal{P}_d$, this deviation can be expressed as a combination of orthogonal concept components:
\begin{equation}\textstyle
\mathbf{v}_{\perp,\mathcal{P}_d}
= \beta_c\, \mathbf{v}_{\perp,\mathcal{P}_d}^{(c)}
+ \sum_{i \ne c} \beta_i\, \mathbf{v}_{\perp,\mathcal{P}_d}^{(i)} ,
\end{equation}
where $\beta_c$ denotes the relative contribution of the target concept, and $\beta_i$ denotes the relative contributions of non-target concepts. The direction of $v_{\perp,\mathcal{P}_d}$ within $\mathcal{P}_d$, referred to as its \emph{phase}, quantifies the relative dominance between the target concept and other concepts.

When the contribution of the target concept exceeds the combined contribution of all other concepts, the plane-level deviation reinforces the axis-aligned drive and strongly promotes activation of the target concept. Conversely, when the combined influence of non-target concepts dominates, their effect outweighs that of the target concept, leading to weak activation or suppression. Based on this observation, we partition the normal plane into \emph{sensitive sectors} using the following sufficient conditions:
\begin{align}\textstyle
&\text{high-sensitivity sector: } & \beta_c > \textstyle\sum_{i\ne c} \beta_i, \\
&\text{low-sensitivity sector: }  & \beta_c \leq \textstyle\sum_{i\ne c} \beta_i .
\end{align}

\paragraph{A Cylindrical Geometric Interpretation.}
Together, these properties induce a cylindrical geometry around each sample, as illustrated in Figure~\ref{fig:crh}.
The difference vector $\mathbf{v}_d$ defines a central axis that characterizes the primary semantic transition.
Orthogonal to this axis, a sample-specific normal plane $\mathcal{P}_d$ captures the residual variations arising from multiple concepts.
The directions within this plane admit a phase structure, given by the orientation of $v_{\perp,\mathcal{P}_d}$ in $\mathcal{P}_d$, which partitions the plane into regions with distinct geometric roles.
This axis--plane--phase configuration forms the structural basis of CRH.

\subsection{Steering under CRH}
\label{sec:decompsition}
The cylindrical model treats the axis, the normal plane, and the phase as largely sample-intrinsic geometric properties.
Steering is therefore the interaction between a generic steering vector and this sample-specific geometry.

For a fixed sample and target concept, any steering vector $\mathbf{v}$ can be decomposed following Equation~(\ref{eq:decomp}) as follows:
\(
\mathbf{v} = \mathbf{v}_{\text{axis}} + \mathbf{v}_{\perp,\mathcal{P}_d} + \boldsymbol{\epsilon},
\)
where $\mathbf{v}_{\text{axis}}$ aligns with the axis, $v_{\perp,\mathcal{P}_d}$ lies in the normal plane, and $\boldsymbol{\epsilon}$ is a residual term.
These components play distinct roles.
The axis component $\mathbf{v}_{\text{axis}}$ drives a stable semantic shift toward a concept-expressing state.
The plane component $\mathbf{v}_{\perp,\mathcal{P}_d}$ controls whether this shift successfully activates the target concept, with its phase in $\mathcal{P}_d$ determining facilitation or diversion by competing concepts.
Its magnitude further modulates the rate of concept emergence, while larger values also increase the risk of semantic drift.
Effective steering thus requires alignment between an axis-aligned push and a favorable phase within $\mathcal{P}_d$.
Additional discussion is provided in Appendix~\ref{app:steer_disc}.

\section{Predictability Properties under CRH}

CRH introduces a sample-specific cylindrical structure that explains irregular steering behavior, which naturally raises the question of \emph{whether steering effects are determined by observable quantities}. For steering to be truly controllable, two conditions must hold: (\emph{i})~the magnitude of the normal-plane component must be inferable, and (\emph{ii})~the phase structure that governs steering effectiveness must also be determined. This section investigates the determinability of these two aspects in turn, in order to clarify to what extent CRH admits predictable and controllable steering behavior.

\subsection{Predictability of Normal-Plane Magnitude}

\begin{theorem}
\label{thm:plane}
Under CRH, the magnitude of the normal-plane component $\|\mathbf{v}_{\perp,\mathcal{P}_d}\|$ is predictable from the decomposition of the steering vector with respect to the axis $\mathbf{a}_d$.
\end{theorem}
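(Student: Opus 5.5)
The argument is essentially Pythagorean. I would fix the sample and target concept, so that $\mathbf{a}_d = \mathbf{v}_d/\|\mathbf{v}_d\|$ and the normal plane $\mathcal{P}_d$ are determined. I then take the decomposition of Equation~(\ref{eq:decomp}), $\mathbf{v} = \mathbf{v}_{\text{axis}} + \mathbf{v}_{\perp,\mathcal{P}_d} + \boldsymbol{\epsilon}$, and make it an orthogonal decomposition:
\begin{itemize}
\item $\mathbf{v}_{\text{axis}} = \langle \mathbf{v},\mathbf{a}_d\rangle \mathbf{a}_d$;
\item $\mathbf{v}_{\perp,\mathcal{P}_d} = \mathrm{Proj}_{\mathcal{P}_d}(\mathbf{v} - \mathbf{v}_{\text{axis}})$;
\item $\boldsymbol{\epsilon}$ is the remainder, which lies in $(\mathrm{span}(\mathbf{a}_d)\oplus\mathcal{P}_d)^\perp$.
\end{itemize}
Since $\mathcal{P}_d$ is spanned by vectors orthogonal to $\mathbf{a}_d$, the three pieces are mutually orthogonal. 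This gives
\[
\|\mathbf{v}\|^2 = \langle \mathbf{v},\mathbf{a}_d\rangle^2 + \|\mathbf{v}_{\perp,\mathcal{P}_d}\|^2 + \|\boldsymbol{\epsilon}\|^2 .
\]
The key step is to invoke the CRH modeling assumption that $\boldsymbol{\epsilon}$ is a residual weakly related to steering, i.e.\ negligible relative to the in-plane component, which is treated as the dominant orthogonal variation. Under that assumption,
\[
\|\mathbf{v}_{\perp,\mathcal{P}_d}\| \approx \sqrt{\|\mathbf{v}\|^2 - \langle \mathbf{v},\mathbf{a}_d\rangle^2} = \|\mathbf{v}\|\sin\theta ,
\]
where $\theta$ is the angle between $\mathbf{v}$ and the axis. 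This quantity is observable, because it depends only on $\mathbf{v}$ and $\mathbf{v}_d = \mathbf{r}_c - \mathbf{r}$. Equivalently, the cosine similarity between the steering vector and the difference vector predicts the normal-plane magnitude. Without the negligibility assumption we still get an exact upper bound, $\|\mathbf{v}_{\perp,\mathcal{P}_d}\|\le \|\mathbf{v}\|\sin\theta$.

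I would also stress what this computation does not use: the concept coefficients $\alpha^{(i)}$ and $\beta_i$. The balance condition $\sum_i \mathbf{v}^{(i)}_{\perp,\mathcal{P}_d} = \mathbf{0}$ implies that the plane is well defined even when the concept composition is latent. The magnitude is a rotation-invariant norm within $\mathcal{P}_d$, so it does not depend on how the plane's basis (the orientation of $\mathbf{a}^{(c)}_\perp$ and $\mathrm{PC}_1$) is chosen. This invariance is what separates the magnitude from the phase, which the next subsection presumably shows is not determined.

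The main obstacle is justifying that $\|\boldsymbol{\epsilon}\|$ is small. Strictly speaking, $\|\mathbf{v}-\mathbf{v}_{\text{axis}}\|$ measures the full orthogonal deviation, not just its projection onto a 2D plane. I would handle this in one of two ways:
\begin{itemize}
\item state the result as the exact upper bound together with an approximate equality under the CRH residual assumption; or
\item argue that, because $\mathcal{P}_d$ contains $\mathbf{a}^{(c)}_\perp$ and the top principal direction of the remaining orthogonal concept components, it captures the dominant part of any steering vector built as a combination of concept directions. Then $\|\boldsymbol{\epsilon}\|$ is controlled by the discarded principal variance.
\end{itemize}
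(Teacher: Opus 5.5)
Your proposal is correct in its rigorous content and rests on the same structural fact and observable as the paper, but it takes a different route. Write $\mathbf{Q}=\mathbf{I}-\mathbf{a}_d\mathbf{a}_d^\top$ and let $\mathbf{P}$ be the projector onto the latent plane. The nesting gives $\mathbf{Q}\mathbf{P}=\mathbf{P}$, and $g(\mathbf{v})=\|\mathbf{Q}\mathbf{v}\|=\|\mathbf{v}\|\sin\theta$ serves as the proxy for $f(\mathbf{v})=\|\mathbf{P}\mathbf{v}\|=\|\mathbf{v}_{\perp,\mathcal{P}_d}\|$.

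The routes differ in what ``predictable'' is taken to mean. The paper never claims $f\approx g$. It differentiates the two quadratic forms and shows $\langle\nabla f^2,\nabla g^2\rangle=4\,\mathbf{v}^\top\mathbf{P}\mathbf{Q}\mathbf{v}=4f^2\ge 0$, i.e.\ the observable and latent magnitudes vary in aligned directions. That needs no assumption on $\boldsymbol{\epsilon}$, but it gives no quantitative control.

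Your orthogonal three-way split gives the exact identity $g^2=f^2+\|\boldsymbol{\epsilon}\|^2$, hence the unconditional bound $f\le g$. It also correctly isolates $\|\boldsymbol{\epsilon}\|$ as the entire gap between observable and latent, a distinction the paper's proof blurs by calling $\mathbf{Q}$ the projection onto $\mathcal{P}_d$.

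Your headline $f\approx\|\mathbf{v}\|\sin\theta$, however, needs $\|\boldsymbol{\epsilon}\|\ll f$, and that is not part of CRH. The paper only says $\boldsymbol{\epsilon}$ is ``weakly related to steering behavior,'' which concerns effect, not norm. Nothing forces the off-axis part of a vector estimated from many samples to concentrate in one sample-specific plane inside a complement of dimension in the thousands.

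Your second justification does not close this gap. The discarded variance of $\{\mathbf{a}^{(i)}_\perp\}_{i\neq c}$ bounds $\|\boldsymbol{\epsilon}\|$ only relative to the size of the combination coefficients, not relative to $f$, and $f$ can be small through cancellation.

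To stay strictly within CRH, pair your bound with the fact the paper's gradient computation amounts to. Since $\mathbf{P}\mathbf{v}=\mathbf{P}\mathbf{Q}\mathbf{v}$, $f$ depends on $\mathbf{v}$ only through $\mathbf{Q}\mathbf{v}$. Moving along $\nabla g^2$ (equivalently, rescaling the orthogonal part, as in the $(1-\rho)$ penalty of Implication~1) therefore multiplies $f$, $g$, and $\|\boldsymbol{\epsilon}\|$ by the same factor. State the approximate equality as an explicit extra hypothesis.
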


\textbf{Proof sketch.}
The goal is to construct an observable quantity whose variation is consistent with the latent normal-plane effect.
Under CRH, the concept-related variation lies within the normal plane induced by $\mathbf{v}_d$, which provides a geometric containment relation between latent and observable components.

We define the observable quantity as the magnitude of the component of $\mathbf{v}$ orthogonal to the axis $\mathbf{a}_d$, i.e., its projection onto $\mathcal{P}_d$.
To compare it with the latent effect, both quantities can be written in quadratic form.
The observable plane magnitude corresponds to a projection onto $\mathcal{P}_d$, while the latent concept contribution corresponds to a projection onto an unknown subspace contained within $\mathcal{P}_d$.

Because of this containment, the two projections satisfy a nesting relation, which ensures that the observable projection captures all variations of the latent one.
By examining their gradients, one can show that the observable quantity and the latent effect always change in aligned directions.
In particular, increasing the magnitude of the observable plane component necessarily increases the corresponding latent contribution.

Therefore, the orthogonal component $\|\mathbf{v}_{\perp,\mathcal{P}_d}\|$ provides a consistent and reliable proxy for the true normal-plane effect, making this part of the steering mechanism predictable from observable quantities.
Details are given in Appendix~\ref{app:proof_plane}.

\subsection{Non-predictability of  Sector Sensitivity}

\begin{figure}[t]
    \centering
    \includegraphics[width=\linewidth]{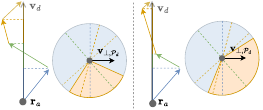}
    \caption{Non-predictability of steering effectiveness on normal-plane phase, where a single difference-vector direction can correspond to multiple possible concept compositions and induce different sensitive sectors.}
    \label{fig:nonlinear}
    \vspace{-3mm}
\end{figure}

\begin{figure*}[ht!]
    \centering
    \includegraphics[width=\linewidth]{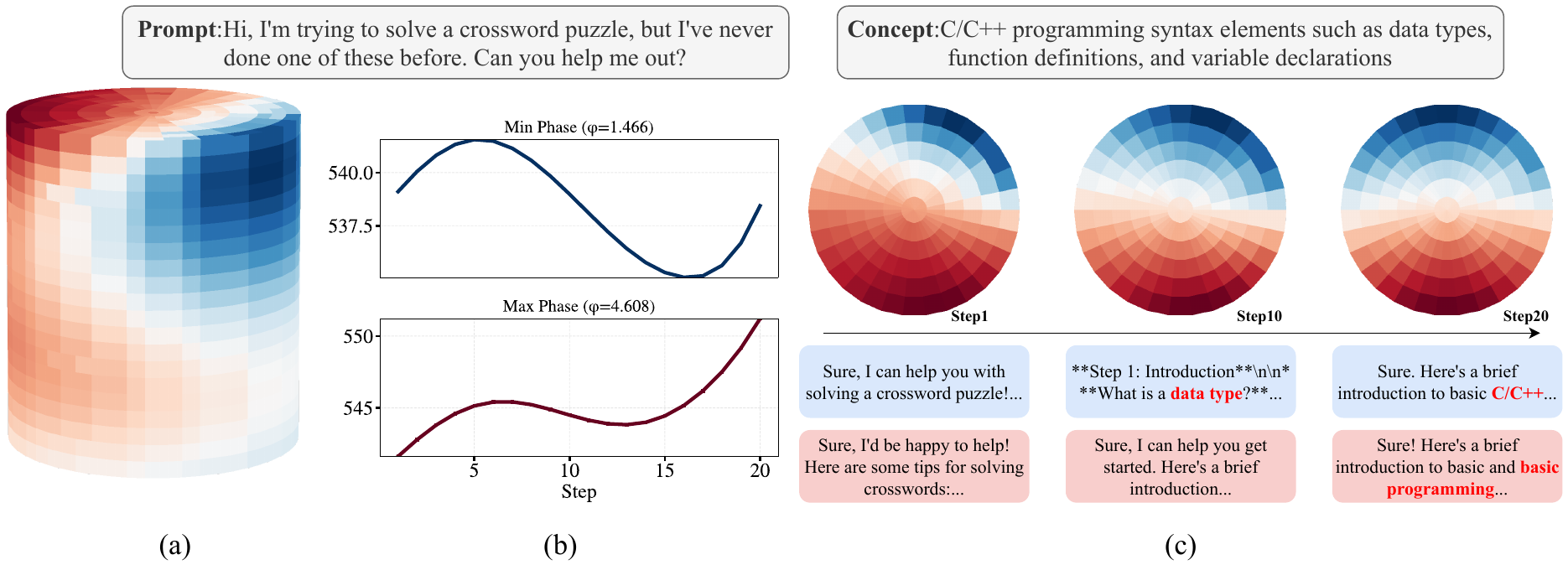}
    \caption{\textbf{Probed cylindrical structure of CRH for a fixed sample.} (a)~We show the loss distribution over the entire cylindrical structure. (b)~We plot loss trajectories along the axis for the phases with the \textcolor{darkblue}{\textbf{minimum}} and \textcolor{darkred}{\textbf{maximum}} average loss. (c)~We present normalized loss distributions over the normal plane at selected steering steps, showing stable sector patterns across steps. For each plane, we show the outputs corresponding to the \colorbox{lightblue}{minimum} and the \colorbox{lightred}{maximum} loss regions and highlight target-concept-related fragments in \textcolor{red}{\textbf{bolded red}}.}

    \label{fig:ideal_case}
\end{figure*}
While the plane magnitude is predictable, the phase within the normal plane depends on how multiple concepts combine.
This dependence leads to information loss.
Figure~\ref{fig:nonlinear} provides an intuitive illustration of this non-determinability, showing that identical normal-plane directions can correspond to opposite steering outcomes under different latent concept configurations.
Full proofs are given in Appendix~\ref{app:proof_collapse} and \ref{app:proof_reductio}.
\begin{lemma}
\label{lem:collapse}
In a $d$-dimensional representation space with more than $d$ concept directions, the mapping from concept strengths to the difference vector $\mathbf{v}_d$ is non-injective.
\end{lemma}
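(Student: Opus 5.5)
The plan is to write the map from concept strengths to difference vectors as a linear map and show it has a nontrivial kernel. Stack the unit concept directions as the columns of a matrix
\[
A = [\mathbf{a}^{(1)}, \dots, \mathbf{a}^{(n)}] \in \mathbb{R}^{d\times n},
\]
and collect the strengths into $\boldsymbol{\alpha} = (\alpha^{(1)},\dots,\alpha^{(n)})^\top \in \mathbb{R}^n$. By Equation~(\ref{eq:def}), the map in question is then the linear map
\[
T:\mathbb{R}^n \to \mathbb{R}^d,\qquad T(\boldsymbol{\alpha}) = A\boldsymbol{\alpha} = \mathbf{v}_d .
\]
A linear map is injective exactly when its kernel is $\{\mathbf{0}\}$, so everything reduces to exhibiting a nonzero $\boldsymbol{\alpha}$ with $A\boldsymbol{\alpha} = \mathbf{0}$.

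The key step is rank–nullity. The rank of $A$ is at most $d$, because its image lies in $\mathbb{R}^d$. Hence
\[
\dim\ker A = n - \mathrm{rank}(A) \ge n - d \ge 1 \qquad \text{whenever } n > d .
\]
Equivalently, any $n > d$ vectors in $\mathbb{R}^d$ are linearly dependent. So there is a nonzero $\boldsymbol{\delta} \in \ker A$. For any strength vector $\boldsymbol{\alpha}$, the two distinct vectors $\boldsymbol{\alpha}$ and $\boldsymbol{\alpha} + t\boldsymbol{\delta}$ (for any $t \neq 0$) produce the same difference vector $\mathbf{v}_d$. Non-injectivity follows. In fact every fiber is an affine subspace of dimension at least $n-d$, which is the quantitative form of the information loss the paper wants.

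There is no substantive obstacle; the only care needed is in how the statement is read. I will emphasize that the non-injectivity holds even though all concept directions are distinct unit vectors, since distinctness does not prevent linear dependence once $n > d$. This links the lemma to the counterexample of Figure~\ref{fig:counterexp}, where three sign-flip concepts are represented in two dimensions. I will also note one point for the subsequent argument about sectors. The kernel direction $\boldsymbol{\delta}$ can be chosen to change the relative weight of the target coordinate $\alpha^{(c)}$ against the others: generically $\delta^{(c)} \neq 0$, and if needed one can pick $\boldsymbol{\delta}$ from a kernel of dimension at least $n-d$ to achieve this. The same $\mathbf{v}_d$, and therefore the same axis, can then arise from concept compositions that put $\beta_c$ on opposite sides of $\sum_{i\neq c}\beta_i$, which is what Figure~\ref{fig:nonlinear} illustrates.
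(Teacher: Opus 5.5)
Your argument is correct and is essentially the paper's proof: both stack the concept directions as the columns of $\mathbf{A}\in\mathbb{R}^{d\times n}$, use $\mathrm{rank}(\mathbf{A})\le d$ and rank--nullity to get $\dim\ker\mathbf{A}\ge n-d\ge 1$, and add a kernel element to one solution to get a different coefficient vector with the same $\mathbf{v}_d$. One caution on your side remark, which the lemma does not need: a kernel vector with $\delta^{(c)}\neq 0$ exists only when $\mathbf{a}^{(c)}$ lies in the span of the other directions, so this holds generically but is not guaranteed (e.g.\ $\mathbf{a}^{(1)}=\mathbf{e}_1$, $\mathbf{a}^{(2)}=\mathbf{e}_2$, $\mathbf{a}^{(3)}=-\mathbf{e}_2$ in $\mathbb{R}^2$ forces $\delta^{(1)}=0$ for every kernel vector).
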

Lemma~\ref{lem:collapse} implies that a single observable axis can correspond to multiple latent concept configurations.
\begin{theorem}
\label{thm:sector_non_predictable}
Under CRH, the sensitive sector in the normal plane $\mathcal{P}_d$ is not reliably predictable from $\mathbf{v}_d$ and $v$.
\end{theorem}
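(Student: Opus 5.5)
We argue by contradiction, constructing two latent concept configurations that agree on every observable quantity but fall into opposite sensitive sectors. Suppose there were a map $F(\mathbf{v}_d,\mathbf{v})$ that always returns the correct sector label, high-sensitivity ($\beta_c>\sum_{i\ne c}\beta_i$) or low-sensitivity ($\beta_c\le\sum_{i\ne c}\beta_i$). The sector label is not a function of $\mathbf{v}_d$ and $\mathbf{v}$ alone. It depends on the latent coefficients $\{\alpha^{(i)}\}$, which determine the per-concept orthogonal components $\mathbf{v}^{(i)}_{\perp}$. Those components in turn fix the plane $\mathcal{P}_d$'s internal coordinates $\mathbf{v}^{(i)}_{\perp,\mathcal{P}_d}$ and the coefficients $\beta_c,\beta_i$ in the expansion of $\mathbf{v}_{\perp,\mathcal{P}_d}$. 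So it suffices to exhibit two configurations $\{\alpha^{(i)}\}$ and $\{\tilde\alpha^{(i)}\}$ that produce the same $\mathbf{v}_d$ but opposite labels for the same $\mathbf{v}$.

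The first step invokes Lemma~\ref{lem:collapse}. With $n>d$ concept directions, the linear map $A:\boldsymbol{\alpha}\mapsto\sum_i\alpha^{(i)}\mathbf{a}^{(i)}$ has a nontrivial kernel. Hence $\tilde{\boldsymbol{\alpha}}=\boldsymbol{\alpha}+t\boldsymbol{\gamma}$ with $\boldsymbol{\gamma}\in\ker A$ gives the same $\mathbf{v}_d$ for every $t$. Since $\mathbf{v}_d$ is unchanged, the axis $\mathbf{a}_d$, the projection $\mathbf{v}_{\text{axis}}$, and the orthogonal remainder $\mathbf{v}-\mathbf{v}_{\text{axis}}$ are all unchanged. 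In particular $\|\mathbf{v}_{\perp,\mathcal{P}_d}\|$, the quantity made predictable in Theorem~\ref{thm:plane}, is unchanged. Thus every observable is invariant along this one-parameter family.

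The second step tracks how the phase decomposition varies with $t$. The individual components change as $\mathbf{v}^{(i)}_{\perp}(t)=(\alpha^{(i)}+t\gamma^{(i)})\mathbf{a}^{(i)}_{\perp}$, although their sum stays $\mathbf{0}$. I would choose $\boldsymbol{\gamma}$ with $\gamma^{(c)}\neq 0$. This is possible whenever $\mathbf{a}^{(c)}$ lies in the span of the remaining directions, which is generic when $n-1\ge d$. Flipping the sign of $t$ then scales and eventually reverses the target component $\beta_c\mathbf{v}^{(c)}_{\perp,\mathcal{P}_d}$ relative to the non-target ones. Because the deviation $\mathbf{v}_{\perp,\mathcal{P}_d}$ is fixed while the "basis" vectors in its expansion move, the coefficients $\beta_c(t)$ and $\beta_i(t)$ must readjust. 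I would make this concrete in a minimal example: $d=2$ and three unit concepts at $120^\circ$ spacing, in the spirit of Figure~\ref{fig:counterexp}. There I would compute explicitly that for one value of $t$ we get $\beta_c>\sum_{i\ne c}\beta_i$, and for another we get the reverse inequality. This mirrors Figure~\ref{fig:nonlinear}. The two configurations then share $(\mathbf{v}_d,\mathbf{v})$, so $F$ must assign them the same label, yet their true labels differ, which is the contradiction.

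The main obstacle is the second step, for two reasons. First, the expansion of $\mathbf{v}_{\perp,\mathcal{P}_d}$ in terms of the $\mathbf{v}^{(i)}_{\perp,\mathcal{P}_d}$ is itself non-unique, since these vectors are linearly dependent and sum to zero. Second, $\mathcal{P}_d$ is defined through $\mathbf{a}^{(c)}_\perp$ and a principal component of the remaining directions. I would handle both by fixing a canonical choice of coefficients, for example the minimum-norm $\boldsymbol{\beta}$, and keeping the concept directions $\mathbf{a}^{(i)}$ fixed while only the strengths vary. With the directions fixed, $\mathcal{P}_d$ does not move, and $\beta_c(t)/\sum_{i\ne c}\beta_i(t)$ becomes an explicit rational function of $t$. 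I then only need to show that this ratio crosses $1$. If the strength-only perturbation proves too rigid to cross, the fallback is to also vary the latent directions $\mathbf{a}^{(i)}$ in a way consistent with the same $\mathbf{v}_d$, which gives more freedom to rotate the sector boundary.
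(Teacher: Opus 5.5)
Your reductio skeleton is the same as the paper's: use Lemma~\ref{lem:collapse} to produce two latent configurations with identical $(\mathbf{v}_d,\mathbf{v})$ but opposite sector labels, and your setup for this is sound. However, the test you plan to run in the second step would fail.

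Let $M$ be the matrix whose columns are the $\mathbf{v}^{(i)}_{\perp,\mathcal{P}_d}$, written in coordinates on $\mathcal{P}_d$. The balance condition says $M\mathbf{1}=\mathbf{0}$. The minimum-norm solution $\boldsymbol{\beta}=M^\top(MM^\top)^{-1}\mathbf{v}_{\perp,\mathcal{P}_d}$ lies in the row space of $M$, so $\sum_i\beta_i=\mathbf{1}^\top\boldsymbol{\beta}=0$ for every $t$. Your ratio $\beta_c/\sum_{i\neq c}\beta_i$ is therefore identically $-1$. It never crosses $1$, and you would wrongly conclude that strength-only perturbations are too rigid.

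The correct test is the sign of $\beta_c-\sum_{i\neq c}\beta_i$. Here this equals $2\beta_c=2(\alpha^{(c)}+t\gamma^{(c)})\,(\mathbf{a}^{(c)}_{\perp})^\top(MM^\top)^{-1}\mathbf{v}_{\perp,\mathcal{P}_d}$. Suppose the non-target components still span $\mathcal{P}_d$ at $t_0=-\alpha^{(c)}/\gamma^{(c)}$. Then the second factor is continuous near $t_0$ and, for generic $\mathbf{v}$, nonzero there. So the label flips as $t$ crosses $t_0$, and no fallback to moving the directions is needed.

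Two side remarks. First, the canonicalization is not optional. The map $\boldsymbol{\beta}\mapsto\boldsymbol{\beta}+\lambda\mathbf{1}$ preserves the expansion but shifts $\beta_c-\sum_{i\neq c}\beta_i$ by $(2-n)\lambda$, so without a canonical choice the label is undefined even for a single configuration. Second, in your $d=2$ example $\mathcal{P}_d$ degenerates to a line. The label there reduces to $\mathrm{sgn}(\alpha^{(c)}+t)$ times a fixed sign. That is a valid counterexample, but it works only by driving the target strength through zero.

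Where you genuinely differ from the paper is the second step. The appendix proof does three things:
\begin{itemize}
\item it replaces your expansion coefficients by inner products $\beta_i=\|\mathbf{v}^{(i)}_{\perp,\mathcal{P}_d}\|\|\mathbf{v}_\perp\|\cos(\phi-\delta_i)$, which sidesteps your non-uniqueness problem;
\item it merges the non-target terms into a single sinusoid $B\cos(\phi-\delta)$;
\item it asserts that configurations sharing $\mathbf{v}_d$ can place $B$ on either side of $\|\mathbf{v}^{(c)}_{\perp,\mathcal{P}_d}\|$.
\end{itemize}
But take $\delta_i$ to be the phase of $\mathbf{v}^{(i)}_{\perp,\mathcal{P}_d}$. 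Then the same balance condition forces $B=\|\mathbf{v}^{(c)}_{\perp,\mathcal{P}_d}\|$ and $\delta=\delta_c+\pi$, so $f(\phi)=2\langle\mathbf{v}^{(c)}_{\perp,\mathcal{P}_d},\mathbf{v}_\perp\rangle$. Again $\sum_i\beta_i=0$, and the sector is just $\mathrm{sgn}\,\beta_c$.

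The amplitude comparison the paper invokes is therefore unavailable. At fixed observables, $f$ can change sign only if the target component itself changes. With the concept directions held fixed, that requires a kernel vector with $\gamma^{(c)}\neq 0$, which exists if and only if $\mathbf{a}^{(c)}\in\mathrm{span}\{\mathbf{a}^{(i)}\}_{i\neq c}$. Your explicit family $\boldsymbol{\alpha}+t\boldsymbol{\gamma}$ and that span condition supply exactly the step the paper leaves implicit. Once you replace the ratio test by the sign test, your argument is more complete than the paper's.
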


\textbf{Proof sketch.}
The effect of a steering direction within $\mathcal{P}_d$ depends on the balance between target and non-target concept components.
Under CRH, $\mathbf{v}_d$ aggregates multiple concept contributions via a non-injective mapping, so the same $\mathbf{v}_d$ can correspond to different latent configurations.

Consequently, even with fixed $\mathbf{v}_d$ and plane direction of $v$, different latent configurations can induce opposite effects, yielding counterexamples to any deterministic prediction based only on $(\mathbf{v}_d, v)$.

Full proof of Lemma~\ref{lem:collapse} and Theorem~\ref{thm:sector_non_predictable} are given in Appendix~\ref{app:proof_collapse} and \ref{app:proof_reductio}.

\subsection{Implications of CRH for Steerability}

CRH attributes irregular steerability to sample-specific concept composition.
Different samples contain different proportions between target and non-target concepts, which leads to different effective decompositions of the same steering vector.
This ratio controls the phase behavior in the normal plane and varies across samples, making single-instance outcomes hard to predict.
In contrast, the normal plane depends only on which concepts are present and remains relatively stable.
As a result, steerability appears irregular at the sample level, but remains measurable in aggregate.
Further discussion is available in Appendix~\ref{app:steer_disc}.

\section{Probing the Cylindrical Structure}
\label{probing_crh}
CRH is formulated in an idealized output representation space, where the cylindrical structure is defined. To empirically test the presence of this cylindrical structure, we construct a controlled approximation of the output space using the model’s internal hidden states. This section examines whether actual steering behavior in hidden representations follows the geometric patterns predicted by CRH.

\subsection{Probing Experiment Setup}
\label{probing_setup}
\textbf{Probing Method.}
Observing the latent geometry of CRH requires a method that establishes a direct link between representations and final outputs. We use one-shot steering vector optimization~\cite{dunefsky2025oneshot} to achieve this coupling.

By freezing the model parameters and optimizing a trainable vector to maximize the probability of a target sentence while suppressing the original output, we effectively perform a reverse mapping from the output space. This setup provides a high-purity experimental window by filtering out unrelated semantic noise, which makes the structural alignment between internal steering and output objectives observable. Optimization details are provided in Appendix~\ref{app:probe_setup}.



\textbf{Probing Process.}
To capture the local geometry around a specific sample more faithfully, we optimize steering vectors under multiple norm budgets and use the resulting vector set to construct a cylindrical coordinate system. This procedure first identifies the dominant optimization direction as the cylinder axis, then uses the next two dominant variation directions to define the normal plane. By probing different axial positions, radii, and phases in this coordinate system, we systematically map the resulting loss landscape and the corresponding generation behavior.

For each sample, we first build the difference vector $\mathbf{v}_d$ by contrasting the last-token residual-stream activations of an original prompt and its concept-constrained counterpart. We then optimize a steering vector under a sequence of norm budgets $\|\mathbf{v}\|\le w\|\mathbf{v}_d\|$ with $w \in \{0.1,0.2,\dots,2\}$, initializing each run with $\mathbf{v}=w\mathbf{v}_d$ and running 30 optimization iterations with learning rate $0.1$ to obtain a set of optimized vectors and their losses. 
Next, we apply Principal Component Analysis (PCA) to this vector set and use the leading principal direction as the cylinder axis, while the next two components span the normal plane. 
Finally, by fixing an axial coordinate and sweeping directions within the normal plane across multiple radius values and phases, we map the loss landscape and model outputs around the sample.

\subsection{Visualization and Analysis}



Figure~\ref{fig:ideal_case} shows the results of the probing experiments and highlights several patterns predicted by CRH. 

Figure~\ref{fig:ideal_case}(a) visualizes the loss distribution over the full cylindrical structure, revealing clear phase-dependent variation in the normal plane and pronounced differences between opposite regions of the cylinder. 

Figure~\ref{fig:ideal_case}(b) plots the loss trajectories along the axis for the phases with the lowest and the highest average loss. In this example, the low-loss phase shows a consistent decrease in the loss as steering progresses, while the high-loss phase exhibits an overall increase, demonstrating sustained promotion and suppression of the target concept, respectively.

Figure~\ref{fig:ideal_case}(c) shows the normalized loss distributions over the normal plane at different steering steps. The locations of low- and high-loss regions remain largely unchanged, indicating stable sensitive and non-sensitive sectors. The outputs corresponding to these regions show that the target concept emerges earlier in sensitive sectors and much later in non-sensitive ones.  

Across samples, sector patterns vary in form, but consistently support the cylindrical structure and phase-dependent sensitivity predicted by CRH. We further provide more cases in Appendix~\ref{app:full_probe}.

Furthermore, Appendix~\ref{app:pca-variance} shows that the top three PCA components can capture most variance of the optimized vectors, and Appendix~\ref{app:null-control} proves that the structured loss landscape is not random.

\section{Verification via Observable Implications}
\label{sec:impl}
CRH explains why steering can be unstable, but this relies on one core
assumption: a sample-level cylindrical geometry exists in the model's
representation space. Since the true concept directions are unknown, this
structure is not directly observable. We therefore derive observable
consequences of the hypothesis and test them empirically.

\subsection{Observable Implications of the Cylindrical Model}
\label{implications}
Here, we present three observable implications from the cylindrical model.
The main goal is to turn the geometric assumptions into measurable trends verifiable in experiments.


\paragraph{Implication 1: Existence of steering effect decomposition.}
Based on the decomposition $ \mathbf{v} = \mathbf{v}_d + \mathbf{v}_{\bot,\mathcal{P}_d} + \boldsymbol{\epsilon} $ in Section~\ref{sec:decompsition}, the steering vector contains two main components with distinct effects on the output.
The axis component \( \mathbf{v}_d \) drives a stable semantic shift toward the target concept.
In contrast, the plane component \( \mathbf{v}_{\bot,\mathcal{P}_d} \) amplifies the strength of concept expression, but reduces output stability.

Therefore, increasing \( \|\mathbf{v}_{\bot,\mathcal{P}_d}\| \) while keeping \( \|\mathbf{v}_d\| \) at a similar scale would produce two systematic effects: (1) \textbf{ Faster concept activation:} The target concept \( c \) emerges at smaller overall steering magnitudes, due to stronger influence within the normal plane. (2) \textbf{ Earlier loss of coherence:} The output becomes unstable or semantically incoherent more easily, as the orthogonal component pushes the representation away from a stable semantic trajectory.

\paragraph{Implication 2: Correlation signature of plane determinability.}

The next question is whether the normal plane is predictable from the axis.
Assume that the normal plane is determined by $\mathbf{v}_d$.
Then the effect of a steering vector depends on how it splits into axis-aligned push and plane deviation, which can be summarized by the angle $\theta$ between $\mathbf{v}$ and $\mathbf{v}_d$.
We show in appendix~\ref{app:dev_eq} that, under this assumption of determinability, that the steerability of a certain concept $c$ can be written in the following form:
\begin{equation}
\mathrm{St}_c(\mathbf{r};\mathbf{v})\propto \|\mathbf{v}_d\|^{m+n}\sin^{m}\theta\cos^{n}\theta,
\label{eq:st_power}
\end{equation}
for fixed $m>0,n>0$ shared across samples.

Let $k=m+n$.
Equation~\eqref{eq:st_power} implies that after removing the scale $\|\mathbf{v}_d\|^{k}$, the remaining variation should be explained by a single mixed-power term in $\sin\theta$ and $\cos\theta$.
Concretely, for a fixed $k$, scanning $m\in(0,k)$ yields a correlation:
\begin{equation}
\frac{\mathrm{St}_c(\mathbf{r};\mathbf{v})}{\|\mathbf{v}_d\|^{k}}
\propto
\sin^{m}\theta\cos^{k-m}\theta ,
\end{equation}
and the model predicts a single clear maximum at the correct split $m$.
If the plane is not determined by $\mathbf{v}_d$, this peak can become weak, unstable across settings, or non-unique.

\paragraph{Implication 3: Sector non-determinability breaks similarity transfer.}
Finally, we ask whether the \emph{most favorable phase} inside the normal plane is also determined by $\mathbf{v}_d$.
Let $\Phi_c(r)$ denote the sensitive sector of sample $r$ for concept $c$, i.e., the set of plane phases where steering favors concept $c$.
If $\Phi_c(r)$ were determined by $\mathbf{v}_d$ through a simple rule, similar difference vectors would imply similar sectors:
\begin{equation}
\mathbf{v}_d(\mathbf{r}_1,c)\approx \mathbf{v}_d(\mathbf{r}_2,c)\ \Rightarrow\ \Phi_c(\mathbf{r}_1)\approx \Phi_c(\mathbf{r}_2).
\end{equation}

Thus, under the same steering vector $\mathbf{v}$, the two samples should show similar steering patterns, such as similar concept-emergence time and similar success or failure.
If the sector is not determined by $\mathbf{v}_d$, then $\mathbf{v}_d$ similarity alone will not reliably transfer to outcomes: two samples with similar $\mathbf{v}_d$ can still fall into different sectors and exhibit very different steering behaviors under the same $\mathbf{v}$.

\subsection{Experimental Validation}

\subsubsection{Experimental Setup}
\label{sec:exp_setup}
\textbf{Models and intervention layers.}
We use two models with different scales and architectures: Gemma-2B-IT~\cite{gemma} and LLaMA2-7B-Chat~\cite{touvron2023llama2}.
Following the protocol in AxBench~\cite{wu2025axbench}, we select two representative layers for each model, located at roughly one-third and two-thirds of the network depth.
These correspond to layers 9 and 13 for Gemma, and layers 16 and 24 for LLaMA2.
All interventions are applied to the residual stream at the selected layers.
\begin{figure}[t]
    \centering
    \includegraphics[width=\linewidth]{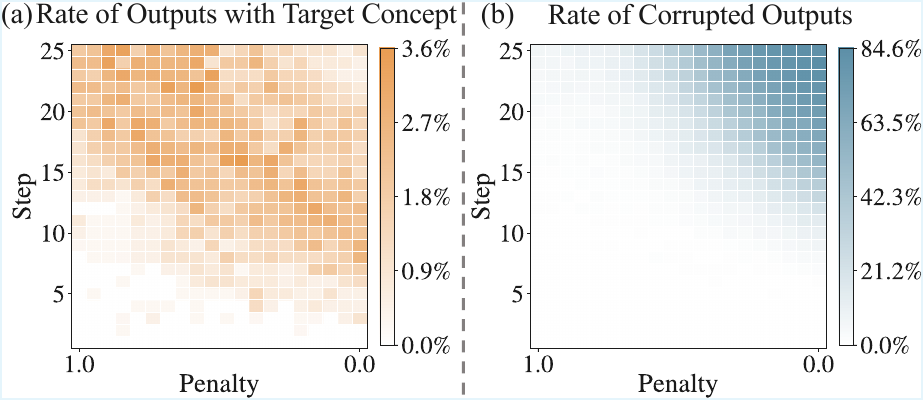}
    \caption{
    Effect of penalizing the normal-plane component on steering outcomes: (a) target concept activation and (b) output corruption, illustrating the trade-off predicted by CRH.
    Shown are results for layer 9 in Gemma-2B-IT.} 
    \label{fig:impl1}
\end{figure}

\textbf{Datasets.}
We construct a concept-level benchmark using concepts collected by AxBench~\cite{wu2025axbench}, together with prompts from AlpacaEval~\cite{alpaca_eval}. The selected concepts are uniformly sampled across text, code, and math domains. Following the AxBench procedure, training pairs consist of ``negative'' examples from raw model outputs and ``positive'' examples generated via concept-specific instruction augmentation. Detailed dataset statistics and construction details are provided in Appendix~\ref{app:data_setup}.

\textbf{Steering Implementation.}
We construct the steering vectors using several standard methods, including DiffMean~\cite{rimsky2024steering}, PCA-based steering~\cite{zou2023representation}, Mean-Centering (MC)~\cite{jorgensen2023improvingactivationsteeringlanguage}, and probe-based steering~\cite{NEURIPS2023_81b83900}; we also test different intervention layers. During inference, we apply steering as \( \mathbf{r} \leftarrow \mathbf{r} + \lambda \mathbf{v} \), and sweep the steering strength \( \lambda \) over a predefined range to examine how model behavior changes. In the main text, we primarily present results for DiffMean applied uniformly to all prompt tokens; Appendix~\ref{app:steer_setup} compares various methods and intervention settings.

\textbf{Steerability Evaluation.}
We evaluate the target concept expression and output coherence using LLM-as-a-judge; see Appendix \ref{app:steer_eval} for detail.
We quantify steerability by measuring how quickly the target concept $c$ appears as steering strength increases.
For a given steering vector \( v \), the proportion of successful samples \( p(\lambda) \) is computed at each value of \( \lambda \).
A linear model \( p(\lambda) \approx a\lambda + b \) is then fitted.

We define the steerability score as
\begin{equation}
\mathrm{St}_c(\mathbf{v}) = \frac{a}{\|\mathbf{v}\|},
\end{equation}
which represents the increase in the success rate per unit norm of the steering vector.

In our experiments, we compute the reference difference vectors \( \mathbf{v}_d \) for test samples to define sample-specific axes in the CRH analysis.
These reference vectors are used only for geometric interpretation and are not involved in steering vector construction.

\begin{figure}[t]
    \centering
    \includegraphics[width=\linewidth]{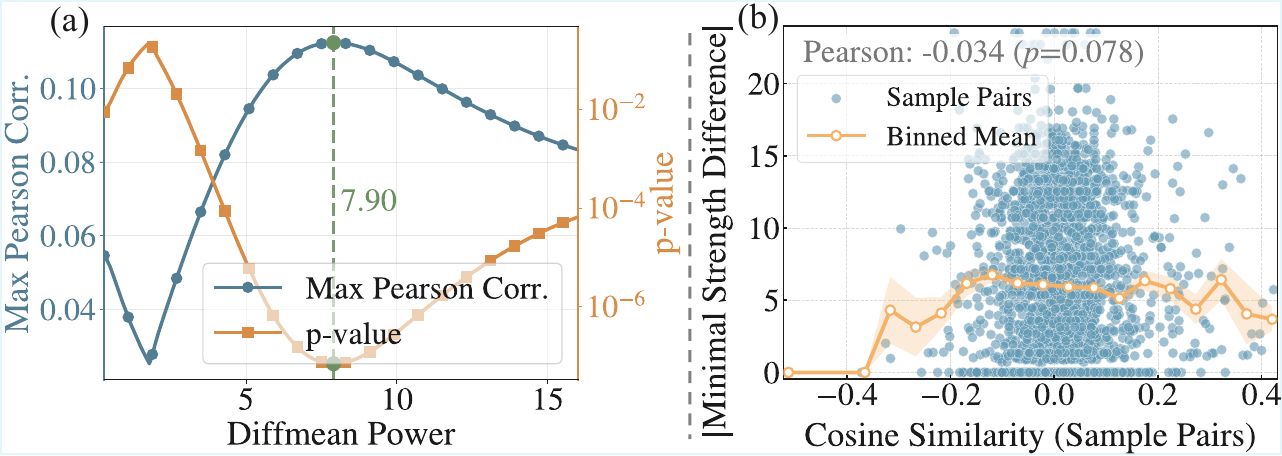}
    \caption{Verification of CRH determinability. 
(a) The maximum correlation exhibits a single peak as the total exponent increases, with the minimum p-value attained near the peak.
(b)~The difference-vector similarity does not correlate with steerability similarity for a fixed concept (Gemma-2B-IT, layer 9).}
 \label{fig:impl2}
\end{figure}

\subsubsection{Experimental Results}
\label{sec:res}

\textbf{Validation of Implication 1.}
In this experiment, we use a penalty to control the orthogonal component of a steering vector.
For each test sample, the method decomposes the steering vector \( \mathbf{v} \) into
$\mathbf{v} = \mathbf{v}_\text{axis} + \mathbf{v}_{\perp},$
where \( v_d \) aligns with the sample-specific difference vector and \( v_{\perp} \) lies in the orthogonal subspace.
The projection process onto \( \mathcal{P}_d \) remains fixed for each sample.
As a result, changing \( \|\mathbf{v}_{\bot}\| \) directly changes \( \|\mathbf{v}_{\bot,\mathcal{P}_d}\| \).

The method applies the following penalty:
\begin{equation}
\mathbf{v}_{\bot} \leftarrow (1-\rho)\, \mathbf{v}_{\bot},
\end{equation}
with \( \rho \in [0,1] \).
Larger \( \rho \) reduces the orthogonal contribution.
Setting $\rho$ to 1 removes this contribution completely.
Our experiment sweeps \( \rho \) from 0 to 1 in 25 steps.
For each value of \( \rho \), the steering strength \( \lambda \) is swept over a range that depends on the specific steering configuration (see Appendix~\ref{app:steer_setup}), and we uniformly divide this range into 25 steps.
For each pair \( (\rho, \lambda) \), the method records whether the output expresses the target concept or becomes invalid.

Figure~\ref{fig:impl1} shows the results for Gemma at layer 9.
Two clear trends emerge:
As shown in Figure~\ref{fig:impl1}(a), smaller \( \rho \) leads to earlier concept emergence at lower \( \lambda \).
In Figure~\ref{fig:impl1}(b), smaller \( \rho \) also causes invalid outputs to appear earlier.
When \( \rho = 1 \), the outputs remain the most stable.
These trends closely match the predictions of the cylindrical model.
See Appendix~\ref{app:full_impl1} for the full results.

\textbf{Validation of Implication 2.}
Following the implication in Section~\ref{implications}, the experiment tests whether the normal plane can be predicted from the axis direction \( \mathbf{v}_d \).
The analysis varies the total exponent \( k \) and, for each \( k \), evaluates how well the normalized steerability follows a mixed-power function of the angle \( \theta \).
This test checks the correlation pattern implied by Equation~\eqref{eq:st_power}.
Specifically, for each \( k \), we define the maximum achievable rank correlation as:
\begin{equation}\textstyle
\rho_k
\;=\;
\max_{m \in (0,k)}
\;\rho\!\left(
\frac{\mathrm{St}_c(\mathbf{v})}{\|\mathbf{v}\|^{k}},
\;\sin^{m}\theta\,\cos^{k-m}\theta
\right),
\end{equation}
where \( \rho(\cdot,\cdot) \) denotes the Pearson rank correlation across concepts.
The corresponding \( p \)-value is recorded for each \( k \).

Figure~\ref{fig:impl2}(a) shows the results for Gemma at layer 9.
The correlation curves exhibit a clear unimodal structure as \( k \) varies.
The peak Pearson correlation coincides with the minimum \( p \)-value, indicating strong statistical significance.
These results follow the theoretical prediction and support the claim that the normal plane is determined by \( \mathbf{v}_d \).
Detailed experimental settings are provided in Appendix~\ref{app:full_impl2}.

\textbf{Validation of Implication 3.}
To test whether sensitive sectors are determined by difference-vector similarity, we analyze the relationship between steering patterns and sample similarity.
For each concept, we record the steering strength at which the target concept first appears.
For any pair of test samples, we compute the cosine similarity between their difference vectors and the absolute difference in their respective steering strengths.
We then aggregate these pairs across all concepts.
Figure~\ref{fig:impl2}(b) shows the resulting distribution.
We find no meaningful correlation between difference-vector similarity and steering behavior (Pearson $=-0.034$, $p>0.05$), i.e.,~sample similarity does not predict steering success or pattern similarity, which supports the non-determinability of sectors. The detailed results are given in Appendix~\ref{app:full_impl3}.
To test whether CRH holds as model size increases, we also verify a larger LLM in Appendix~\ref{app:add-exp}.

\section{Related Work}
\label{app:rel_work}

\paragraph{Activation Steering.}
Activation steering controls model behavior by intervening on internal representations at inference time~\cite{zou2023representation}. Most methods derive steering directions from representation differences between positive and negative examples~\cite{rimsky2024steering}, offering an interpretable and efficient alternative to fine-tuning~\cite{marks2023geometry,jiang2025energydrivensteeringreducingfalse}. However, steering effectiveness is often sample-dependent~\cite{tan2024analysing}, and learned directions may contain spurious or entangled features~\cite{brumley2024comparing}. Prior work addresses these issues through vector denoising~\cite{zhao2025denoising}, optimization-based steering~\cite{dunefsky2025oneshot,cao2024personalized}, non-linear interventions~\cite{vu2025angular,oozeer2025beyond}, and adaptive steering mechanisms~\cite{lee2025programming,zhang2024truthx}.

Unlike these methods, which aim to improve steering performance, we seek to explain why steering remains intrinsically sample-dependent. CRH provides a geometric account of steering variability by characterizing the sample-specific structure underlying steering interventions.

\paragraph{Geometric Concept Representations.}
The dominant theoretical framework for concept representations is the Linear Representation Hypothesis (LRH), which models concepts as linear directions in representation space and motivates steering through vector arithmetic~\cite{zou2023representation,gurnee2024language,park2024linear}. Subsequent work has shown that learned representations exhibit feature superposition~\cite{elhage2022toy} and richer geometric structures, including circular~\cite{engels2025not}, clock-like~\cite{nanda2023progress,zhong2023clock}, and manifold-based organizations~\cite{modell2025origins}. Other extensions broaden LRH from token-level concepts to sentence-level or multi-token representations~\cite{ng,tacla}. Unlike these studies, which study how concepts are represented, we focus on how representation geometry shapes steering behavior. In particular, CRH relaxes LRH's orthogonality assumption while preserving linear concept representations, yielding a sample-specific cylindrical geometry that explains steering instability and the limited predictability of steering outcomes.


\section{Conclusion and Future Work}

We introduced the Cylindrical Representation Hypothesis (CRH), a geometric extension of the Linear Representation Hypothesis that explains the sample-specific nature of language model steering. By modeling concept representations through a central axis, a normal plane, and sensitive sectors, CRH provides a principled account of why steering outcomes can vary substantially across samples despite similar intervention directions. Our theoretical analysis identifies which aspects of steering are predictable and which are fundamentally uncertain, while extensive experiments provide evidence for the proposed cylindrical structure.

Our results suggest that steering variability arises from intrinsic representation geometry rather than solely from imperfect steering vectors. This perspective shifts attention from finding better steering directions to understanding the geometric constraints that govern steerability. We hope CRH serves as a foundation for future work on more reliable, interpretable, and controllable representation-based interventions in large language models.

In future work, we plan to incorporate sample-specific geometry into steering algorithms toward more reliable model control. Beyond steering, these geometric principles may help understand broader phenomena in representation engineering, interpretability, alignment, and behavior control.

\section*{Impact Statement}
This work advances the understanding of how Lareg Language Models (LLMs) represent and manipulate concepts, specifically challenging the idealized Linear Representation Hypothesis.
By characterizing the sample-specific cylindrical geometry of representations, we provide a mechanistic explanation for why model steering often fails on specific inputs.
This contribution is crucial for the reliable deployment of LLMs across diverse applications ranging from safety alignment and truthfulness to personalization and precise attribute control.
Relying on unstable steering methods in these areas can lead to unpredictable behaviors or experiences.
By highlighting the intrinsic uncertainty in the ``sensitive sectors'' of representation space, our work encourages the community to move beyond linear assumptions and design more robust steering techniques that account for these geometric constraints.

While improved understanding of steering may ultimately enable more effective model control, the results presented here do not directly increase a model's capabilities. Instead, they help clarify the opportunities and limitations of representation-based interventions, contributing to the broader goal of building safer and more reliable AI systems.

\bibliography{bib_processed_cc_fixed}
\bibliographystyle{icml2026}

\newpage
\appendix
\onecolumn

\section{Limitations}

While CRH offers a new geometric perspective for interpreting steering behavior, several limitations should be noted. 
First, CRH is proposed as a conceptual framework rather than a directly verifiable property of model representations. 
It assumes that sample-level difference vectors $\mathbf{v}_d$ can be expressed as combinations of underlying concept directions in an idealized output representation space. This assumption may benefit from further theoretical analysis or targeted mechanistic studies. Second, although CRH attributes steering variability to sample-specific geometric structure, our analysis primarily focuses on vector-level interactions and does not explicitly examine how fine-grained activation dynamics within individual samples give rise to such geometry. Incorporating more detailed activation-level analyzes may provide additional insight into the origins of sample-specific behavior. Finally, the empirical evaluation considers a finite set of concepts drawn mainly from text, code, and math domains, and does not examine more complex settings such as multi-step mathematical derivations, long-horizon reasoning, or agent-style tool use. These more involved scenarios may exhibit additional structure not captured in the current analysis.

\section{Further Details of LRH}
\label{app:lrh}

\textbf{Basic Formulation.}
The Linear Representation Hypothesis (LRH) assumes that language models encode semantic concepts as linear directions in high-dimensional representation spaces~\cite{zou2023representation, gurnee2024language}. Concept strength is reflected by projection magnitude onto the corresponding direction, enabling both detection and manipulation through vector arithmetic. This linear assumption forms the core representational basis of most steering methods.

\textbf{Causal Separability and Causal Inner Product.}
In addition to linearity, LRH introduces \emph{causal separability}~\cite{park2024linear}, a concept-level assumption stating that logically non-interfering concepts admit independent interventions. LRH further posits the existence of a \emph{causal inner product} under which such concepts correspond to orthogonal directions in representation space, with orthogonality reflecting the absence of causal interference.

\textbf{Implications for Steering.}
Together, these assumptions provide theoretical support for steering. Linearity explains why adding a concept direction can influence model outputs, while causal separability motivates an ideal setting in which steering can be lossless and safe, as orthogonal concept directions prevent unintended interference.


\section{Further Details in CRH}
\label{app:crh_detail}
To align with practical steering settings, CRH adopts the core linear representation assumption of LRH while relaxing several idealized conditions that are unlikely to hold in real models.

\textbf{Concepts.}
Following~\citet{park2024linear}, a concept is defined as a latent variable that is caused by the context $X$ and causally affects the output $Y$. We focus on binary concepts and fix an ordering for each concept (e.g., \texttt{male}$\Rightarrow$\texttt{female}) so that the sign of a representation is well-defined.

\textbf{Internal Representations.}
LRH distinguishes embedding-side representations for concept detection and unembedding-side representations for concept manipulation. In practice, steering methods often intervene at intermediate representations without an explicit separation between detection and manipulation~\cite{rimsky2024steering}. CRH therefore treats all such feature spaces uniformly as \emph{internal representations} of the model.

\textbf{Representation Granularity.}
While the original LRH formulation is stated at the level of a single token~\cite{park2024linear}, practical steering commonly operates over multiple tokens and still relies on linear concept representations~\cite{rimsky2024steering}. CRH retains this assumption without restricting the number of tokens involved.

\textbf{Steering-Aligned Representation Space.}
Steering methods differ in intervention location and scope, such as intervening on different layers or tokens~\cite{zhang2026locatesteerimprovepractical}. To abstract away these implementation details, CRH defines an \emph{output representation space} in which each point causally corresponds to a model output under a fixed intervention scheme. This space is aligned with steering: movements correspond to output changes, and concepts remain linearly represented.

\textbf{Relaxed Causal Separability.}
Unlike LRH, CRH does not assume that logically non-interfering concepts correspond to orthogonal directions under a causal inner product. In the intervention space, concept representations may overlap even when concepts are causally separable at the semantic level. This relaxation reflects practical constraints and allows CRH to model interference between concepts.

\paragraph{Rationale for the Use of Unnormalized Concept Vectors for Sector Definition.}
In Section~\ref{deriv}, when defining sectors, the plane component $\mathbf{v}_{\perp,\mathcal{P}_d}$ is expanded using projected concept vectors $\mathbf{v}^{(i)}_{\perp,\mathcal{P}_d}$ rather than normalized directions. This choice reflects that steering effects depend on the effective magnitude of concept contributions. Identical angular weights can lead to very different outcomes when concept vectors differ in norm. Using unnormalized vectors allows the coefficients $\beta_i$ to represent relative effective influence, whereas normalization would implicitly assume comparable effect scales across concepts, which does not hold under CRH.

\section{Steering Effect Evaluation}
\label{app:steer_eval}
As LLMs demonstrate strong capabilities in data annotation~\cite{zheng2023judging}, we employ Llama-3.3-70B-Turbo~\cite{llama3herdmodels} as an automatic LLM-judge to classify model outputs under steering. 
Specifically, the judge is instructed to review the input question, the target concept, and the model response, and assign one of three labels: 
(i) \textit{Normal}: a standard answer that does not reflect the target concept; 
(ii) \textit{Target Concept-related}: an answer that explicitly contains or relates to the target concept; 
(iii) \textit{Corrupted}: a corrupted or nonsensical output.
Prompt used for evaluation can be found in the Appendix~\ref{app:prompt}.

To mitigate potential bias in LLM-based evaluation, we further conduct a human validation study to verify whether the LLM-assigned labels align with human judgment on the presence of the target concept.
We construct the evaluation set in two stages: 
(1) for each concept, we randomly sample one example to ensure full concept coverage; 
(2) we add additional examples via label-stratified sampling to ensure sufficient representation for each predicted category.

Subsequently, we recruit a human annotator to independently review the sampled items. To ensure consistency, we instruct the annotator  to follow the exact same evaluation protocol and label definitions as the LLM judge described above.

\begin{table}[h]
\centering
\small
\caption{Human validation of LLM-based evaluation labels. Human annotations are treated as ground truth.}
\begin{tabular}{@{}lcccc@{}}
\toprule
Class & Precision & Recall & F1 & Item Num. \\
\midrule
Normal (non-target) & 0.84 & 0.91 & 0.88 & 25 \\
Target Concept-related & 0.96 & 0.92 & 0.94 & 26 \\     
Corrupted & 0.98 & 0.96 & 0.97 & 51 \\
\midrule
Weighted Avg. & 0.94 & 0.94 & 0.94 & 102 \\  
\bottomrule
\end{tabular}
\label{tab:eval_human}
\end{table}

We report the agreement between LLM labels and human annotations using accuracy and macro-F1, as well as per-class precision, recall, and F1, as shown in Table~\ref{tab:eval_human}. 
Notably, the LLM judge achieves an overall accuracy of 94\%, demonstrating its reliability in classifying model outputs under steering.


\section{Steering Effects Discussion}
\label{app:steer_disc}
From the CRH perspective, commonly used linear criteria can be reinterpreted geometrically. Directional agreement effectively measures the consistency of the axis-aligned projection across training samples. When training difference vectors are well aligned, the resulting steering vector is more likely to align with the sample-specific central axis at test time. This alignment stabilizes the axis component during steering and makes it easier to reliably induce the target concept.

In contrast, data separability mainly reflects the overall projection magnitude required to distinguish positive and negative samples. While this captures the strength of the axis-aligned component, it provides no information about the magnitude or phase of the normal-plane component. As a result, data separability cannot indicate whether the orthogonal contribution will facilitate or suppress concept activation. This limitation leads to weaker explanatory power compared to directional agreement, and prior work has observed cases where data separability shows little correlation with actual steering outcomes.

\section{Proofs and Derivations}
\subsection{Proof of Theorem~\ref{thm:plane}}
\label{app:proof_plane}

\begin{proof}
We begin by stating the goal and conclusion of the proof.
The goal is to show that there exists an observable function $g(\mathbf{v})$, free of unknown quantities, whose variation with respect to $\mathbf{v}$ is consistent with that of the latent effect $f(\mathbf{v})$. If such consistency holds, $g$ can be used to predict the behavior of $f$.
The conclusion is that choosing $g(\mathbf{v})$ as the orthogonal component of $\mathbf{v}$ relative to $\mathbf{v}_d$ is sufficient for this purpose.

We define the observable quantity directly as the magnitude of the orthogonal component of $\mathbf{v}$ with respect to $\mathbf{v}_d$, whose unit vector is \(\mathbf{a}_d\):
\begin{equation}
\mathbf{v}_\perp = \mathbf{v} - \langle \mathbf{v}, \mathbf{a}_d \rangle \mathbf{a}_d,
\qquad
g(\mathbf{v}) = \|\mathbf{v}_\perp\|.
\end{equation}
This quantity depends only on $\mathbf{v}$ and $\mathbf{v}_d$ and is therefore fully observable.

To compare the variation trends of $g$ and the latent effect $f$, it is convenient to consider their squared magnitudes. The squared norm of the orthogonal component can be written in matrix form. Let
\begin{equation}
\mathbf{Q} = \mathbf{I} - \mathbf{a}_d \mathbf{a}_d^\top ,
\end{equation}
which represents the projection onto the normal plane $\mathcal{P}_d$. Then
\begin{equation}
\mathbf{v}_\perp = \mathbf{Q}\mathbf{v},
\qquad
g^2(\mathbf{v}) = \|\mathbf{v}_\perp\|^2 = \mathbf{v}^\top \mathbf{Q} \mathbf{v}.
\end{equation}

The latent steering effect is defined analogously as the squared projection onto the hidden concept subspace,
\begin{equation}
f^2(\mathbf{v}) = \|\mathrm{Proj}_{P_d}(\mathbf{v})\|^2 = \mathbf{v}^\top \mathbf{P}\mathbf{v},
\end{equation}
where $\mathbf{P}$ is the corresponding projection matrix.

Both $f^2$ and $g^2$ are quadratic functions of $\mathbf{v}$. Their gradients with respect to $\mathbf{v}$ are
\begin{align}
\nabla_{\mathbf{v}} f^2 &= 2\mathbf{P}\mathbf{v}, \\
\nabla_{\mathbf{v}} g^2 &= 2\mathbf{Q}\mathbf{v}.
\end{align}

Since the hidden concept subspace lies within the normal plane, the projection operators satisfy
\begin{equation}
\mathbf{Q}\mathbf{P} = \mathbf{P}.
\end{equation}
Using this relation, the inner product of the two gradients is
\begin{align}
\langle \nabla_{\mathbf{v}} f^2,\nabla_{\mathbf{v}} g^2\rangle
&= (2\mathbf{P}\mathbf{v})^\top (2\mathbf{Q}\mathbf{v}) \\
&= 4\,\mathbf{v}^\top \mathbf{P}\mathbf{Q}\mathbf{v} \\
&= 4\,\mathbf{v}^\top \mathbf{P}\mathbf{v}
= 4 f^2(\mathbf{v}) \ge 0.
\end{align}

The non-negativity of this inner product shows that the variation directions of $g^2$ and $f^2$ with respect to $\mathbf{v}$ are always aligned. Therefore, the observable orthogonal component $g(\mathbf{v})$ can serve as a reliable surrogate for the latent normal-plane effect.

\end{proof}

\subsection{Proof of Lemma~\ref{lem:collapse}}
\label{app:proof_collapse}
\begin{proof}
Assume that we have concepts more than the dimension of the latent space. Let $\mathbf{A}:\mathbb{R}^n\rightarrow\mathbb{R}^d$ be a linear operator whose columns are the concept direction vectors $\{\mathbf{a}^{(i)}\}$. The observable difference vector $\mathbf{v}_d$ is generated by
\begin{equation}
\mathbf{A}\boldsymbol{\alpha}=\mathbf{v}_d,
\end{equation}
where $\boldsymbol{\alpha}\in\mathbb{R}^n$ denotes the latent concept coefficients.

Since $\mathbf{A}$ maps an $n$-dimensional coefficient space into a $d$-dimensional observation space with $n>d$, its columns cannot be linearly independent. Consequently, there exist non-zero coefficient vectors that are mapped to zero, and the null space of $\mathbf{A}$ is non-trivial. Formally, the rank of $\mathbf{A}$ satisfies
\begin{equation}
\mathrm{rank}(\mathbf{A})\le d,
\end{equation}
which implies
\begin{equation}
\dim(\ker(\mathbf{A})) = n-\mathrm{rank}(\mathbf{A}) \ge n-d \ge 1.
\end{equation}

If $\boldsymbol{\alpha}_0$ is one solution to $\mathbf{A}\boldsymbol{\alpha}=\mathbf{v}_d$, then for any $\boldsymbol{\gamma}\in\ker(\mathbf{A})$,
\begin{equation}
\mathbf{A}(\boldsymbol{\alpha}_0+\boldsymbol{\gamma})=\mathbf{v}_d
\end{equation}
also holds. Hence, different latent coefficient vectors can produce the same observable difference vector.

Therefore, the mapping from latent concept strengths to the observable representation is many-to-one, and information about the latent composition is necessarily collapsed.
\end{proof}

\subsection{Proof of Theorem~\ref{thm:sector_non_predictable}}
\label{app:proof_reductio}
\begin{proof}
Similar to the proof above, the goal is to show that there does not exist a deterministic observable function
$g(\mathbf{v},\mathbf{v}_d)$ whose output is consistent with the true relative steering
effect. Equivalently, the sign of the actual steering effect cannot be predicted from
the observable quantities alone and is therefore unobservable.

Let $\mathbf{v}_d$ be the observable difference vector defining the normal plane
$\mathcal{P}_d$. Let $\mathbf{v}_\perp\in\mathcal{P}_d$ denote the projection of the
steering vector $\mathbf{v}$ onto this plane, and let its direction within the plane be
parameterized by a phase $\phi$. For each concept $i$, the induced coefficient can be
written as
\begin{equation}
\beta_i(\phi)
=
\|\mathbf{v}_{\perp,\mathcal{P}_d}^{(i)}\|\,
\|\mathbf{v}_\perp\|
\cos(\phi-\delta_i),
\end{equation}
where $\|\mathbf{v}_{\perp,\mathcal{P}_d}^{(i)}\|$ denotes the magnitude of the concept
direction projected onto the normal plane, and $\delta_i$ is a fixed but unobservable
phase offset.

The net steering effect is defined as the target contribution minus the aggregate
non-target interference,
\begin{equation}
f(\phi)
=
\|\mathbf{v}_\perp\|
\big(
\|\mathbf{v}_{\perp,\mathcal{P}_d}^{(c)}\|\cos(\phi-\delta_c)
-
\sum_{i\neq c}
\|\mathbf{v}_{\perp,\mathcal{P}_d}^{(i)}\|\cos(\phi-\delta_i)
\big).
\end{equation}
By trigonometric superposition, the non-target term can be rewritten as a single sinusoid,
\begin{equation}
\sum_{i\neq c}
\|\mathbf{v}_{\perp,\mathcal{P}_d}^{(i)}\|\cos(\phi-\delta_i)
=
B\cos(\phi-\delta),
\end{equation}
where the amplitude $B$ and phase $\delta$ depend only on the latent non-target
configuration. Thus,
\begin{equation}
f(\phi)
=
\|\mathbf{v}_\perp\|
\big(
\|\mathbf{v}_{\perp,\mathcal{P}_d}^{(c)}\|\cos(\phi-\delta_c)
-
B\cos(\phi-\delta)
\big).
\end{equation}

Assume, for contradiction, that there exists a deterministic observable function
$g(\mathbf{v},\mathbf{v}_d)$ such that
\begin{equation}
\mathrm{sgn}\, g(\mathbf{v},\mathbf{v}_d)
=
\mathrm{sgn}\, f(\phi)
\quad
\text{for all latent configurations.}
\end{equation}
For fixed $\mathbf{v}$ and $\mathbf{v}_d$, the value of $g(\mathbf{v},\mathbf{v}_d)$ is
uniquely determined.

However, by Lemma~\ref{lem:collapse}, the mapping from latent concept configurations to
the observable difference vector $\mathbf{v}_d$ is non-injective. Therefore, even when
$(\mathbf{v},\mathbf{v}_d)$ are fixed, distinct latent configurations can induce
different interference amplitudes $B$.
As a result, one can choose two latent configurations that yield identical observable
inputs $(\mathbf{v},\mathbf{v}_d)$ but satisfy
\begin{equation}
B<\|\mathbf{v}_{\perp,\mathcal{P}_d}^{(c)}\|
\quad\Rightarrow\quad
f(\phi)>0,
\end{equation}
and
\begin{equation}
B>\|\mathbf{v}_{\perp,\mathcal{P}_d}^{(c)}\|
\quad\Rightarrow\quad
f(\phi)<0.
\end{equation}
Since $g(\mathbf{v},\mathbf{v}_d)$ must take the same value for identical observables, it
cannot match the sign of $f$ in both cases. This contradiction shows that no such
deterministic function $g$ can exist.

Therefore, the sensitive sector $\Phi_c$, determined by the sign of the true steering
effect $f$, is not a deterministic function of the observable difference vector
$\mathbf{v}_d$ and is fundamentally unobservable.
\end{proof}

\subsection{Derivation of Equation~\ref{eq:st_power}}
\label{app:dev_eq}
Steerability is defined as the effective concept activation induced per unit norm of the steering vector $\mathbf{v}$.  
Under the Cylindrical Representation Hypothesis, a successful steering intervention requires simultaneous alignment along the sample-specific axis $\mathbf{v}_d$ and deviation within its normal plane.

Let $\theta$ denote the angle between $\mathbf{v}$ and $\mathbf{v}_d$.  
The steering vector is decomposed as
\begin{equation}
\mathbf{v}
=
(\|\mathbf{v}\|\cos\theta)\,\hat{\mathbf{v}}_d
+
(\|\mathbf{v}\|\sin\theta)\,\hat{\mathbf{v}}_\perp ,
\end{equation}
where $\hat{\mathbf{v}}_d$ is the unit axis direction and $\hat{\mathbf{v}}_\perp$ lies in the normal plane $\mathcal{P}_d$.

The intrinsic semantic scale of the sample is characterized by the magnitude $\|\mathbf{v}_d\|$.  
The axial contribution to steering efficiency is assumed proportional to the projection of this scale onto the steering direction,
\begin{equation}
E_{\mathrm{axial}} \propto \|\mathbf{v}_d\|\cos\theta .
\end{equation}
The planar contribution reflects the orthogonal deviation required to activate the target concept,
\begin{equation}
E_{\mathrm{planar}} \propto \|\mathbf{v}_d\|\sin\theta .
\end{equation}

Assuming multiplicative interaction between axial transport and planar activation, steerability of concept $c$ is modeled as a joint power-law,
\begin{equation}
\mathrm{St}_c(\mathbf{r};\mathbf{v})
\propto
\left(E_{\mathrm{axial}}\right)^n
\left(E_{\mathrm{planar}}\right)^m .
\end{equation}
Substituting the above expressions yields
\begin{equation}
\mathrm{St}_c(\mathbf{r};\mathbf{v})
\propto
\left(\|\mathbf{v}_d\|\cos\theta\right)^n
\left(\|\mathbf{v}_d\|\sin\theta\right)^m .
\end{equation}
Rearranging terms gives
\begin{equation}
\mathrm{St}_c(\mathbf{r};\mathbf{v})
\propto
\|\mathbf{v}_d\|^{m+n}
\sin^m\theta
\cos^n\theta.
\end{equation}

The exponents $m,n$ are treated as concept-level constants.  
This mixed power-law defines a characteristic correlation pattern: if the normal plane is determined by $\mathbf{v}_d$, normalized steerability exhibits a unimodal peak when evaluated against $\sin^m\theta\cos^n\theta$.

\section{Further Details in Probing Experiments}
\subsection{Probing Experiments Setup}
\label{app:probe_setup}

\paragraph{Optimization Objective.}
The probing procedure follows the steering optimization framework in~\cite{dunefsky2025oneshot}. We adopt the mixed steering objective, which simultaneously promotes a target output and suppresses the original model output. Let the input prompt be $x=(x_1,\ldots,x_n)$, the target output sequence be $y=(y_1,\ldots,y_m)$, and the steering vector be $v$. Denote by $P_{\text{model}}(y\mid x;v)$ the probability assigned by the model under intervention $v$. The promotion and suppression losses are defined as
\begin{equation}
\mathcal{L}_{+}(x,y;v)
= - \sum_{k=0}^{m-1} \log P_{\text{model}}\!\left(y_{k+1}\mid y_{\le k}, x; v\right),
\end{equation}
\begin{equation}
\mathcal{L}_{-}(x,y;v)
= - \sum_{k=0}^{m-1} \log \!\left(1 - P_{\text{model}}\!\left(y_{k+1}\mid y_{\le k}, x; v\right)\right).
\end{equation}
The mixed steering objective minimizes their sum,
\begin{equation}
\mathcal{L}_{\text{mix}}(x,y;v)
= \mathcal{L}_{+}(x,y;v) + \mathcal{L}_{-}(x,y;v),
\end{equation}
which encourages the target sequence while discouraging the original output, following the formulation in the referenced work.

\paragraph{Optimization Procedure.}
For each test sample, we optimize steering vectors under multiple norm constraints, ranging from $0.1\|\mathbf{v}_d\|$ to $2.0\|\mathbf{v}_d\|$ and evenly divided into 20 steps. At each step, we initialize the steering vector by scaling the difference vector to the target norm, and then optimize it for 30 epochs with a learning rate of $0.1$. During optimization, we add the steering vector to the internal representation of all prompt tokens at every forward pass.

\paragraph{Constructing the Cylindrical Structure.}
After optimization, we collect the resulting set of steering vectors obtained at different norm scales. These vectors correspond to directions that yield relatively high target probability around the sample. We apply Principal Component Analysis to this vector set, using the first principal component as the cylinder axis and the second and third components to span the normal plane. This defines a sample-specific cylindrical coordinate system for subsequent analysis.

\paragraph{Phase and Sensitivity Probing.}
Using this cylindrical coordinate system, we probe steering behavior by fixing positions along the axis and scanning directions within the normal plane. Specifically, at each axial position, we sample 30 evenly spaced phases in the normal plane. For each phase, we further sweep over 5 magnitudes ranging from $0$ to $\|\mathbf{v}_d\|$. For every probed steering vector, we record the corresponding steering loss. This procedure produces a loss landscape over the normal plane, which reflects how steering sensitivity varies with phase. We report both the raw loss values and normalized loss patterns across different axial positions to distinguish sensitive and non-sensitive phases.

\subsection{More Probed Cylindrical Structure Cases}
\label{app:full_probe}

In addition to the cases discussed in Section~\ref{probing_crh}, we conduct further probing experiments to examine how the cylindrical structure manifests across different settings. We present several representative cases below to illustrate additional observations.

\paragraph{Different concepts on the same input.}
For the same input prompt, different target concepts can induce different sensitive sector structures, even when steering remains effective overall. In Figure~\ref{fig:ideal_case2}, the input prompt is identical to that in Figure~\ref{fig:ideal_case}, but the target concept is changed from ``C/C++ syntax'' to ``HTML/CSS attributes''. As shown in Figure~\ref{fig:ideal_case2}(a), the overall loss distribution differs markedly from Figure~\ref{fig:ideal_case}(a) in the previous case. The loss trajectories in Figure~\ref{fig:ideal_case2}(b) also show a weaker increasing trend for the highest-loss phase compared to Figure~\ref{fig:ideal_case}(b). Moreover, the angular range of the non-sensitive sector in Figure~\ref{fig:ideal_case2}(c) is narrower than that observed in Figure~\ref{fig:ideal_case}(c). Despite these structural differences, the same qualitative behavior persists: as the steering step increases, the target concept emerges earlier in the low-loss sector, and the loss distribution becomes increasingly polarized, indicating sustained promotion and suppression effects.

\paragraph{Earlier failure in non-sensitive sectors.}
In some cases, non-sensitive sectors can cause steering to fail at earlier stages. In Figure~\ref{fig:ideal_case3}, although the overall loss decreases as the steering step increases, the effects of sensitive and non-sensitive sectors differ substantially. As shown in Figure~\ref{fig:ideal_case3}(c), at step $6$, both sectors correctly exhibit the target concept. However, starting from step $10$, outputs from the non-sensitive sector no longer express the target concept, while outputs from the sensitive sector consistently retain the correct concept across all steps. This contrast highlights the role of sector structure in determining the stability of steering outcomes.

\paragraph{Attenuation of sector effects at large radius.}
In other cases, increasing the radial component can partially weaken the influence of sector structure. As shown in Figure~\ref{fig:ideal_case4}, although clear sector separation appears at smaller steps, steering at sufficiently large steps eventually produces the target concept across the entire normal plane. This suggests that, while sector effects strongly shape the onset and stability of concept activation, their influence can be reduced when the overall steering magnitude becomes dominant.

\section{Further Details in Verification Experiments}

\subsection{Further Details in Dataset Construction}
\label{app:data_setup}
We sample 100 concepts from the 500 concepts provided by AxBench, with uniform coverage across text, code, and math categories. For input prompts, we use questions from AlpacaEval.

For each concept, we randomly sample 100 questions to construct the training data, which is sufficient for statistical analysis. 
For each question, we first feed the question directly to the model and obtain an output that does not target the concept. We then concatenate this output with the original question to form a negative example. Next, following the AxBench procedure~\cite{wu2025axbench}, we use GPT-5.1\footnote{\url{https://platform.openai.com/docs/models/gpt-5.1}} to convert the concept description into an instruction that can be appended to the question, so that the model's output must express or reflect the target concept. The prompts for concept notation rewriting are shown in the Appendix~\ref{app:prompt}. We then generate model output for the augmented input and concatenate it with the original question to form a positive example. In some cases, the model refuses to respond because it cannot express the given concept. We filter out such refusals.

To build the training set, for each concept, we randomly select 100 positive-negative pairs constructed as described above.
For the test set, test questions are directly sampled from AlpacaEval. Different test splits are used for different experiments. For the penalty experiments, we randomly select 5 questions per concept that do not overlap with the training set. For the predictability experiments, we randomly select 50 non-overlapping questions per concept. During evaluation, the model answers these test questions under different steering strengths. The scale of this setup is comparable to prior work~\cite{bas2025actuallysteermultibehaviorstudy}.

\subsection{Details in Steering Setup}
\label{app:steer_setup}

To further verify the generality of CRH, we evaluate multiple steering vector construction methods and apply them at different token positions during inference. Across all steering settings, we use a shared set of hyperparameters: we fix the maximum number of generated tokens to $32$, set the model temperature to $0.1$, and apply the steering vector at every forward pass to the specified token positions. 

On top of this common setup, we consider multiple steering configurations that differ in how the steering vector is constructed and where it is applied, which allows us to test whether the cylindrical structure and associated steering behavior persist across common steering practices.

\subsubsection{Steering Methods}
For all methods, we first collect representations of the last prompt token from a fixed layer of the residual stream for all training samples. We then construct steering vectors using the following widely used approaches.

\textbf{DiffMean}~\cite{rimsky2024steering}.  
We compute the difference between representations of positive and negative samples and take the mean of these difference vectors as the steering vector.

\textbf{PCA-based steering}~\cite{zou2023representation}.  
We apply Principal Component Analysis to the set of difference vectors between positive and negative samples, and use the first principal component as the steering vector.

\textbf{Mean-Centering (MC)}~\cite{jorgensen2023improvingactivationsteeringlanguage}.  
We compute the centroid of positive sample representations and the centroid of negative sample representations, and use their difference as the steering vector.

\textbf{Probe-based steering}~\cite{NEURIPS2023_81b83900}.  
We train a linear classifier to distinguish positive and negative sample representations, and use the weight vector of the classifier, corresponding to the normal direction of the decision boundary, as the steering vector.

\subsubsection{Steered Tokens}

Steering interventions can be applied at different token positions during inference, which can lead to different ranges of concept sensitivity and stability. 
We consider four representative strategies adopted in prior research: 
(1) \textbf{All Prompt Tokens}, which adds the steering vector to the representations of every token within the input prompt~\cite{dunefsky2025oneshot}; (2) \textbf{Last Prompt Token Only}, targeting exclusively the final token of the prompt~\cite{gao2025shaping}; (3) \textbf{All Output Tokens}, where the intervention is applied continuously to each new token generated during the decoding phase~\cite{rimsky2024steering}; and (4) \textbf{All Tokens}, which applies the intervention universally to both the prompt and all generated output tokens~\cite{chalnev2024improvingsteeringvectorstargeting}.

Applying steering at different token positions changes both the effective strength of steering and the onset of instability. To ensure comparability across settings, we empirically select a maximum steering factor for each configuration. We define this factor as the smallest value at which a majority of cases begin to produce corrupted or incoherent outputs. The emergence of such corruption indicates that steering has reached a critical strength. 

Table~\ref{tab:steer_factor} summarizes the corresponding steering factors used in our experiments.
All steering location experiments are conducted under DiffMean.

\begin{table}[h]
\centering
\small
\caption{Maximum steering factors used for different models and token positions.}
\begin{tabular}{@{}l l c@{}}
\toprule
Model & Token Position & Max Steering Factor \\
\midrule
\multirow{4}{*}{Gemma} 
& all prompt tokens      & 5.0  \\
& last prompt token      & 50.0 \\
& all output tokens      & 50.0 \\
& all tokens             & 3.0  \\
\midrule
\multirow{4}{*}{LLaMA} 
& all prompt tokens      & 5.0  \\
& last prompt token      & 80.0 \\
& all output tokens      & 80.0 \\
& all tokens             & 3.0  \\
\bottomrule
\end{tabular}
\label{tab:steer_factor}
\end{table}

\subsection{Computation Resources}
All experiments are conducted on two NVIDIA H100 GPUs. For data annotation and evaluation, we use the DeepInfra API with the Llama-3.3-70B-Turbo model\footnote{\url{https://deepinfra.com/meta-llama/Llama-3.3-70B-Instruct-Turbo}}. 
For a fixed model, layer, steering implementation, and target concept, running steering experiments across all test samples and steering configurations takes approximately 5 minutes. Using multiprocessing, completing experiments across all concepts requires about 6 hours in total.

\newpage
\section{Prompt Templates}
\label{app:prompt}
\begin{tcolorbox}[
    enhanced,
    adjusted title,
    breakable,
    float*=h,  
    width=\textwidth,  
    title={\textbf{\textsf{Prompt Template for Concept Notation Rewriting}}},
    coltitle=black,
    colbacktitle=boxtitle,
    fonttitle=\small\sffamily,
    attach boxed title to top left={xshift=10pt, yshift*=-12pt},
    boxed title style={
        colback=boxtitle,
        colframe=boxframe,
        boxrule=0.5pt,
        arc=2pt,
        outer arc=2pt,
    },
    colframe=boxframe,
    colback=boxback,
    boxrule=0.8pt,
    left=6pt, right=6pt, top=12pt, bottom=6pt,
    arc=3pt,
    drop shadow=black!15!white, 
]

\begin{lstlisting}[
    basicstyle=\ttfamily\footnotesize,
    breaklines=true,
    columns=fullflexible,
    mathescape=true,
    frame=none 
]
You are a professional system prompt writer. Generate an instruction that can be concatenated after the user input to guide a language model in producing responses.  
Objective: Direct the model to include content related to "{concept_name}" (the concept) in its responses. Ensure the responses reference this concept under any user inputs, even if it doesn't directly answer the question or seems out of context.
Optionally, provide in-context examples to reinforce this behavior.  
This prompt must be as short and strong as possible.
Return only the final prompt without any additional text.


\end{lstlisting}
\end{tcolorbox}

\begin{tcolorbox}[
    enhanced,
    adjusted title,
    breakable,
    float*=h,  
    width=\textwidth,  
    title={\textbf{\textsf{Prompt Template for Labeling Outputs}}},
    coltitle=black,
    colbacktitle=boxtitle,
    fonttitle=\small\sffamily,
    attach boxed title to top left={xshift=10pt, yshift*=-12pt},
    boxed title style={
        colback=boxtitle,
        colframe=boxframe,
        boxrule=0.5pt,
        arc=2pt,
        outer arc=2pt,
    },
    colframe=boxframe,
    colback=boxback,
    boxrule=0.8pt,
    left=6pt, right=6pt, top=12pt, bottom=6pt,
    arc=3pt,
    drop shadow=black!15!white, 
]

\begin{lstlisting}[
    basicstyle=\ttfamily\footnotesize,
    breaklines=true,
    columns=fullflexible,
    mathescape=true,
    frame=none 
]

You are an evaluator for a steering experiment. Given a user query, a target concept, and a model output, assign ONE label from {1, 2, 3} using these rules:

1 = The output is a normal answer to the query, but it does NOT reflect the target concept.
2 = The output contains or relates to the target concept (successful steering).
3 = Corrupted output (nonsense, heavy repetition, broken symbols).

Notes: The model output may be truncated due to max tokens; truncation can occur in any label. Judge based on the visible content only.
Return ONLY the label as a single digit: 1, 2, or 3.

\end{lstlisting}
\end{tcolorbox}

\newpage

\section{Full  Results for Verification Experiments}
\paragraph{Overall Trend.}

As illustrated in Figures \ref{fig:overall_gemma} and \ref{fig:overall_llama}, across various layers of Gemma and Llama2, the target activation generally exhibits a unimodal trend, first increasing to a peak and then declining, while the corruption rate shows a monotonic increase toward saturation. This suggests that while moderate steering activates the target concept via the axis component, excessive intensity allows the normal-plane component to dominate, eventually pushing representations into incoherent regions of the latent space. This "activation-then-collapse" pattern empirically supports the trade-off between concept promotion and semantic stability inherent in our cylindrical model.


\begin{figure}[h]
    \centering
    \begin{subfigure}{0.49\textwidth}
        \centering
        \includegraphics[width=\textwidth]{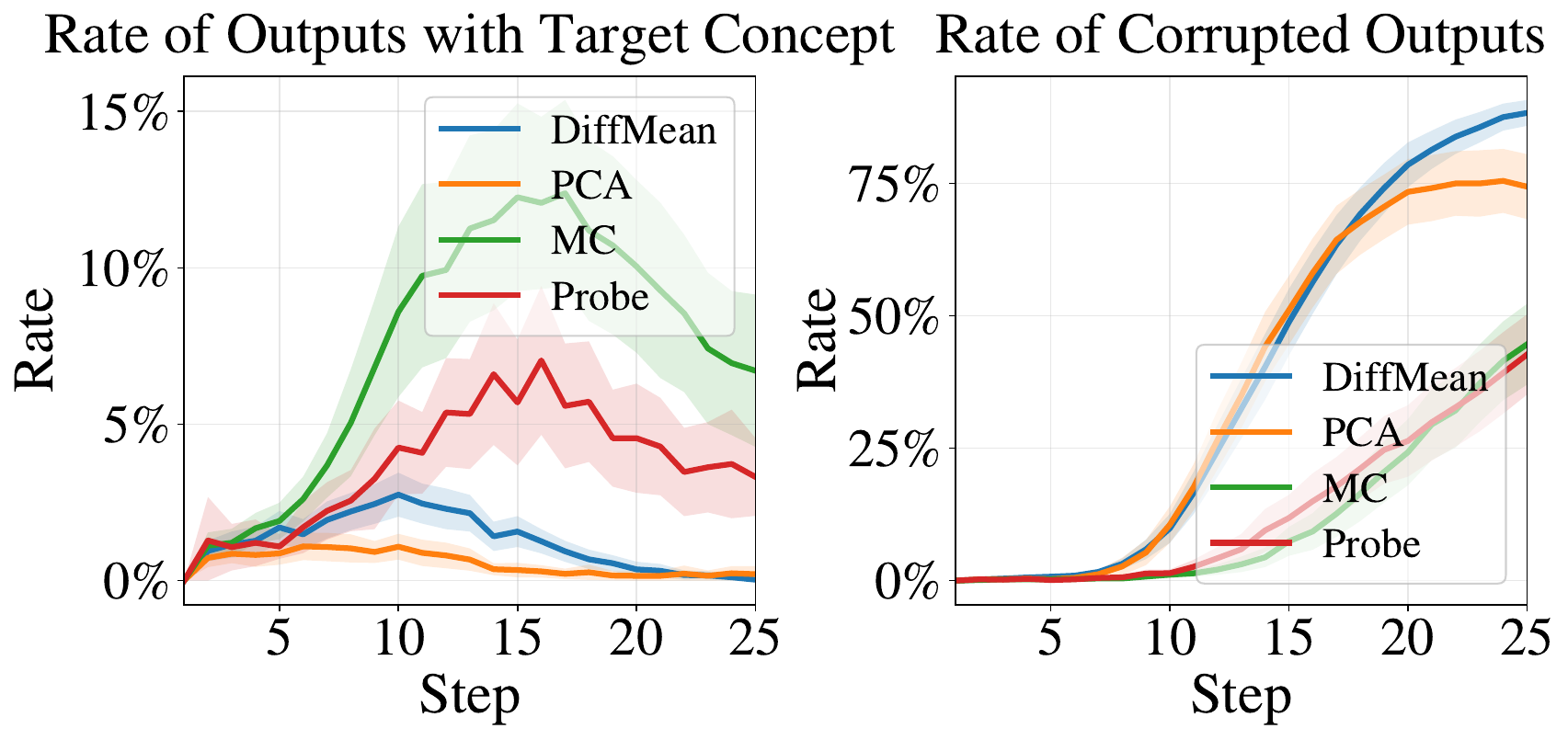}
        \caption{Target activation and corruption rates over steps (Layer 9).}
    \end{subfigure}
    \hfill 
    \begin{subfigure}{0.49\textwidth}
        \centering
        \includegraphics[width=\textwidth]{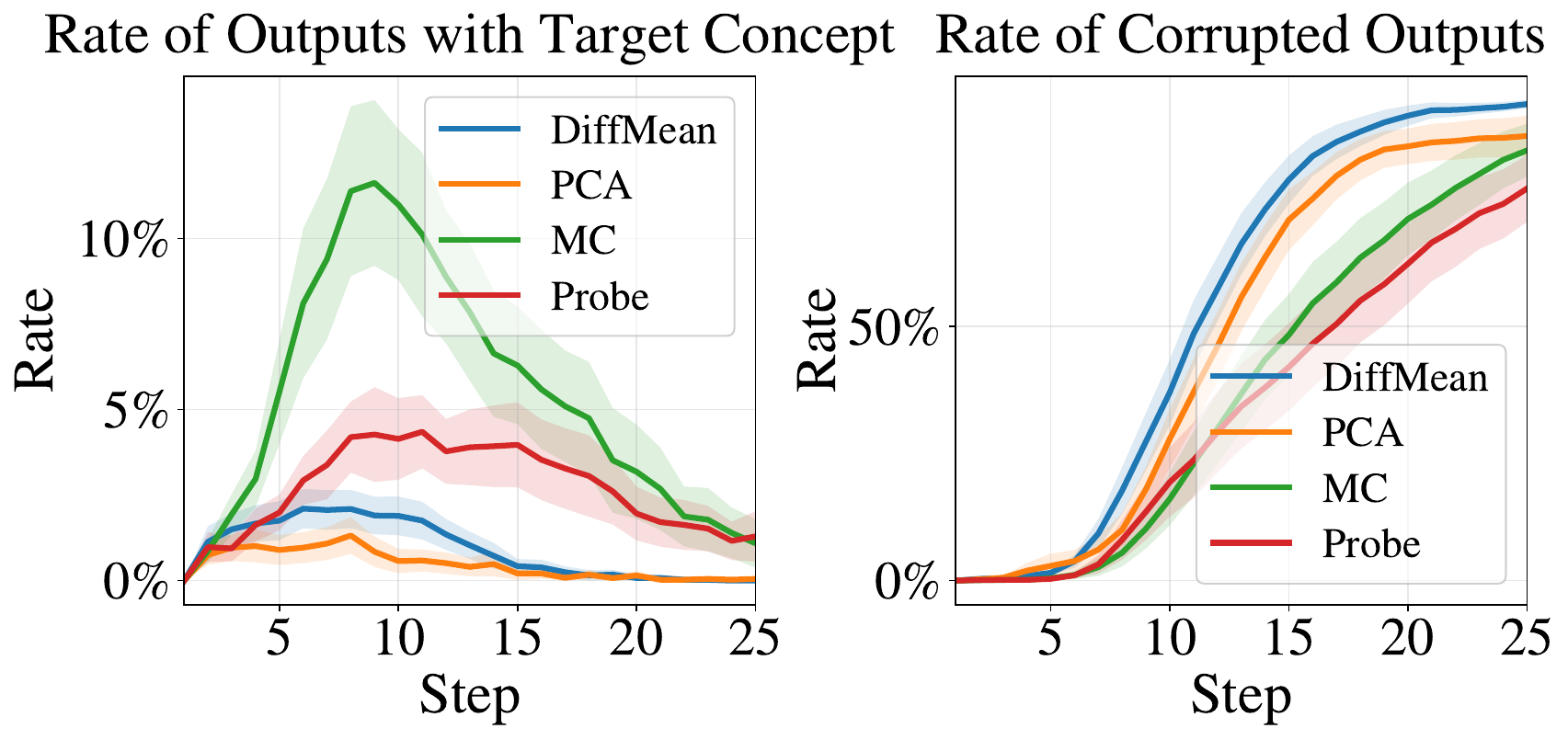}
        \caption{Target activation and corruption rates over steps (Layer 13).}
    \end{subfigure}  
    \caption{Results of different vector construction methods on Gemma-2B-IT. The shaded area indicates the 95\% confidence interval across concepts.}
        \label{fig:overall_gemma}
\end{figure}

\begin{figure}[h]
    \centering
    \begin{subfigure}{0.49\textwidth}
        \centering
        \includegraphics[width=\textwidth]{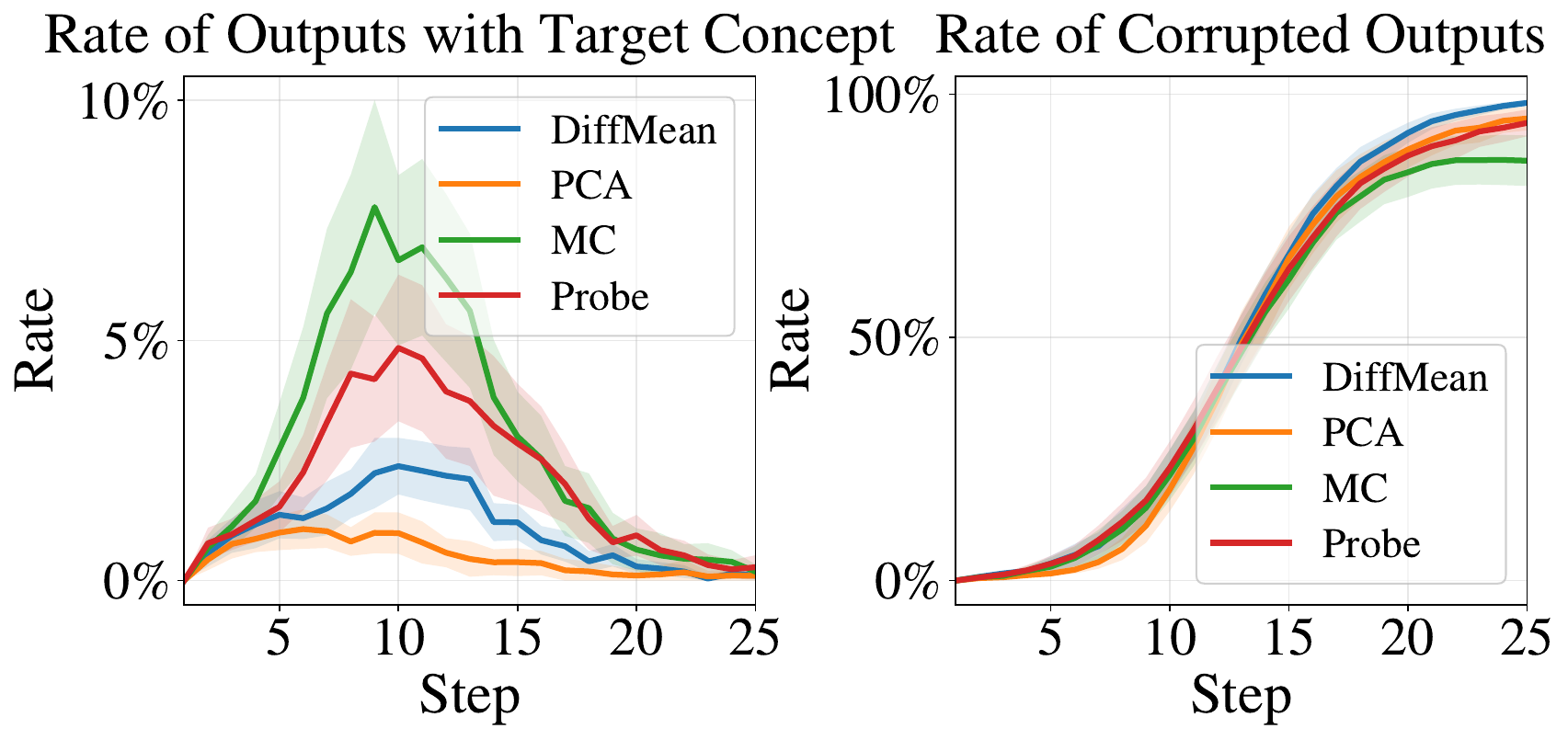}
        \caption{Target activation and corruption rates over steps (Layer 16).}
    \end{subfigure}
    \hfill 
    \begin{subfigure}{0.49\textwidth}
        \centering
        \includegraphics[width=\textwidth]{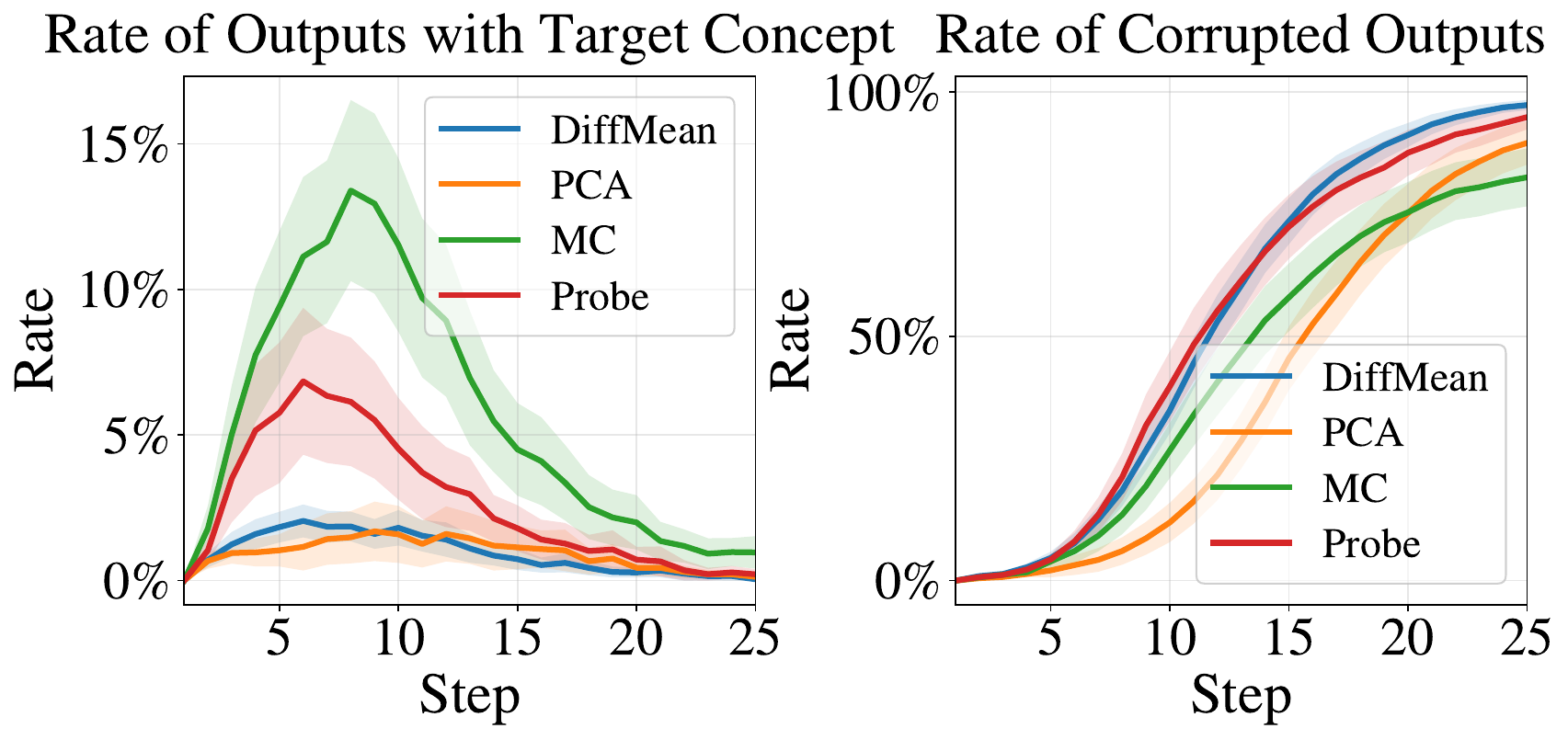}
        \caption{Target activation and corruption rates over steps (Layer 24).}
    \end{subfigure}  
    \caption{Results of different vector construction methods on Llama2-7B-Chat. The shaded area indicates the 95\% confidence interval across concepts.}
    \label{fig:overall_llama}
\end{figure}

\clearpage
\begin{figure*}[t]
    \centering
    \includegraphics[width=\linewidth]{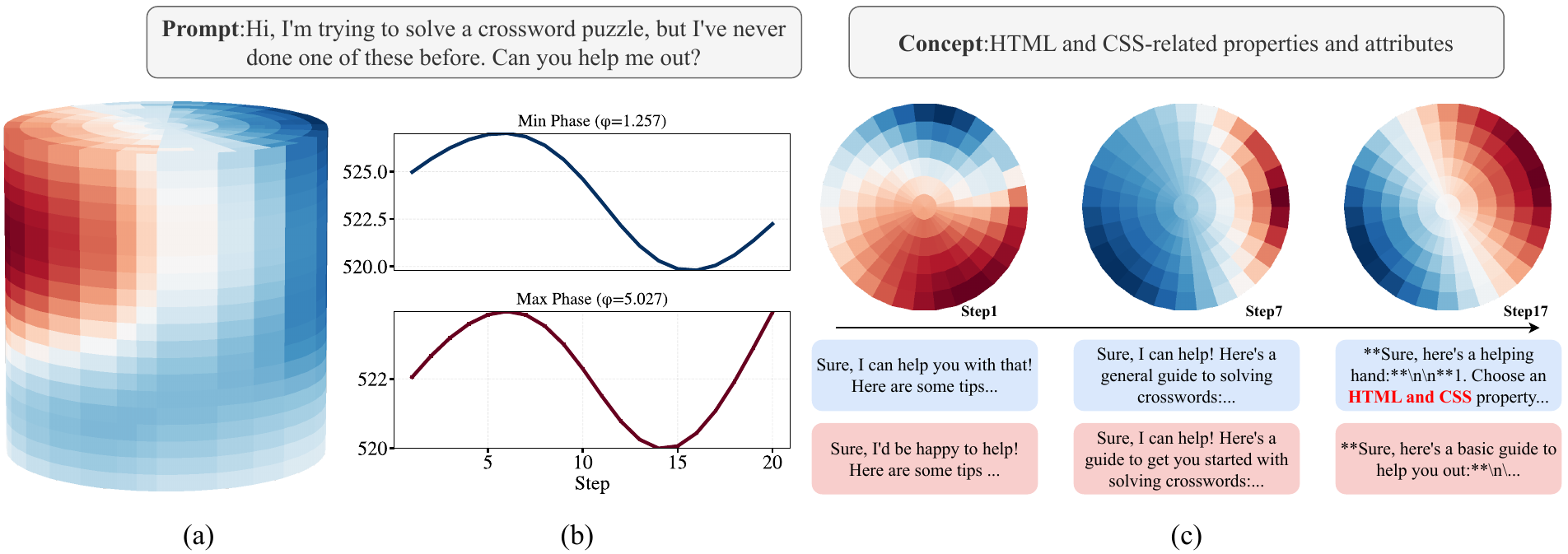}
    \caption{\textbf{Probed cylindrical structure of CRH for a fixed sample.} (a) The loss distribution over the entire cylindrical structure. (b) We plot loss trajectories along the axis for the phases with the \textcolor{darkblue}{\textbf{minimum}} and \textcolor{darkred}{\textbf{maximum}} average loss. (c) We present normalized loss distributions over the normal plane at selected steering steps, showing stable sector patterns across steps. For each plane, we show outputs corresponding to the \colorbox{lightblue}{minimum} and \colorbox{lightred}{maximum} loss regions and highlight target-concept-related fragments in \textcolor{red}{\textbf{bolded red}}.}

    \label{fig:ideal_case2}
        \vspace{-3mm}
\end{figure*}
\begin{figure*}[t]
    \centering
    \includegraphics[width=\linewidth]{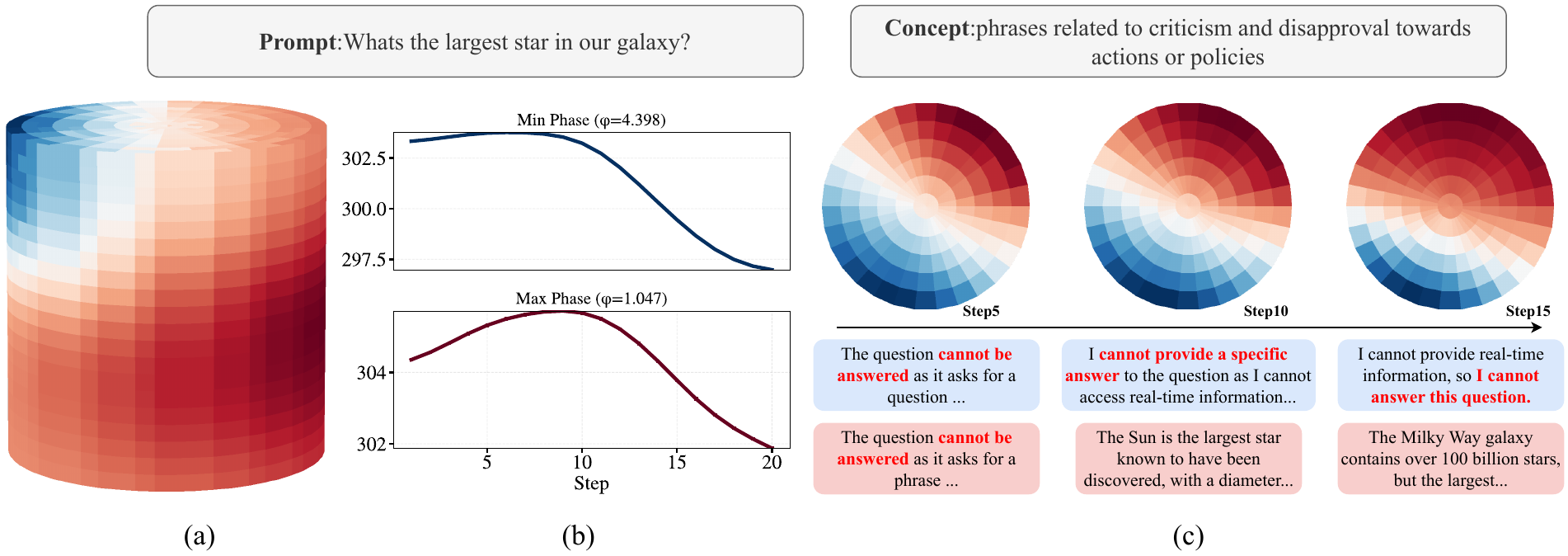}
    \caption{\textbf{Probed cylindrical structure of CRH for a fixed sample.} (a) The loss distribution over the entire cylindrical structure. (b) We plot loss trajectories along the axis for the phases with the \textcolor{darkblue}{\textbf{minimum}} and \textcolor{darkred}{\textbf{maximum}} average loss. (c) We present normalized loss distributions over the normal plane at selected steering steps, showing stable sector patterns across steps. For each plane, we show outputs corresponding to the \colorbox{lightblue}{minimum} and \colorbox{lightred}{maximum} loss regions and highlight target-concept-related fragments in \textcolor{red}{\textbf{bolded red}}.}

    \label{fig:ideal_case3}
        \vspace{-3mm}
\end{figure*}
\begin{figure*}[t]
    \centering
    \includegraphics[width=\linewidth]{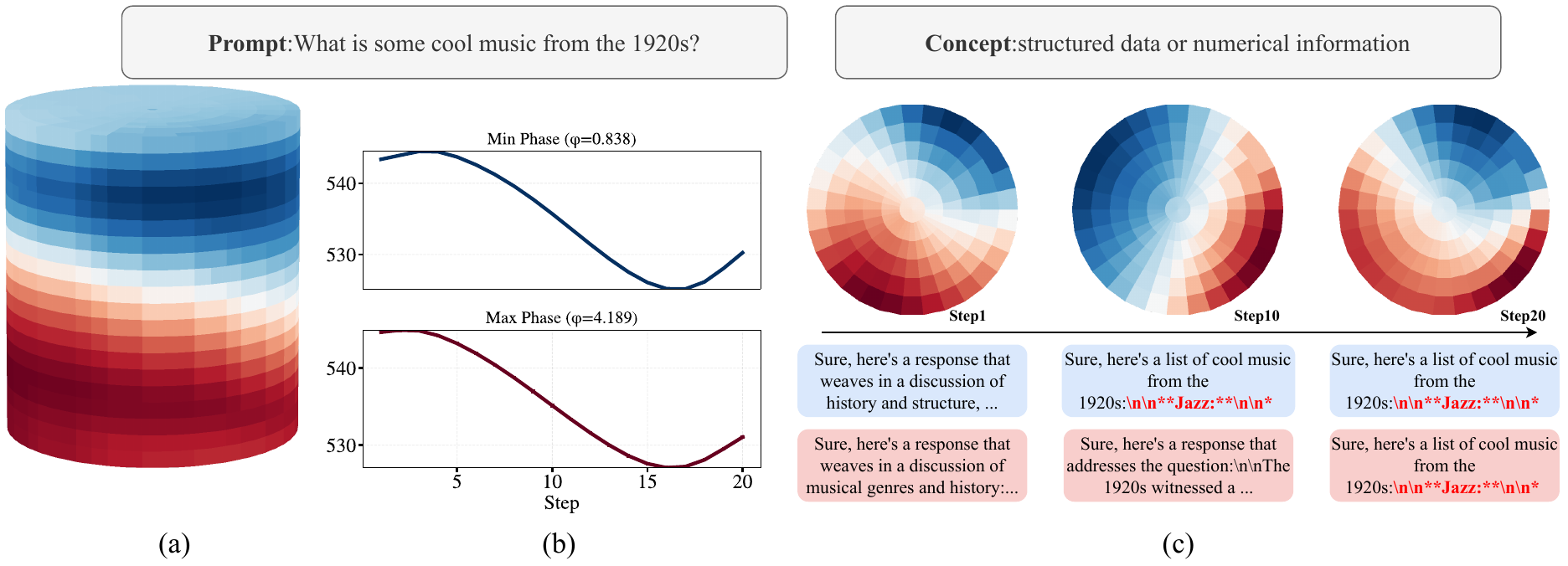}
    \caption{\textbf{Probed cylindrical structure of CRH for a fixed sample.} (a) The loss distribution over the entire cylindrical structure. (b) We plot loss trajectories along the axis for the phases with the \textcolor{darkblue}{\textbf{minimum}} and \textcolor{darkred}{\textbf{maximum}} average loss. (c) We present normalized loss distributions over the normal plane at selected steering steps, showing stable sector patterns across steps. For each plane, we show outputs corresponding to the \colorbox{lightblue}{minimum} and \colorbox{lightred}{maximum} loss regions and highlight target-concept-related fragments in \textcolor{red}{\textbf{bolded red}}.}

    \label{fig:ideal_case4}
        \vspace{-3mm}
\end{figure*}

\clearpage

\vspace{3em}
\subsection{Full Results of Penalty Experiments}
\label{app:full_impl1}

\vspace{3em}

\begin{figure}[H]
    \centering
    \includegraphics[width=\linewidth]{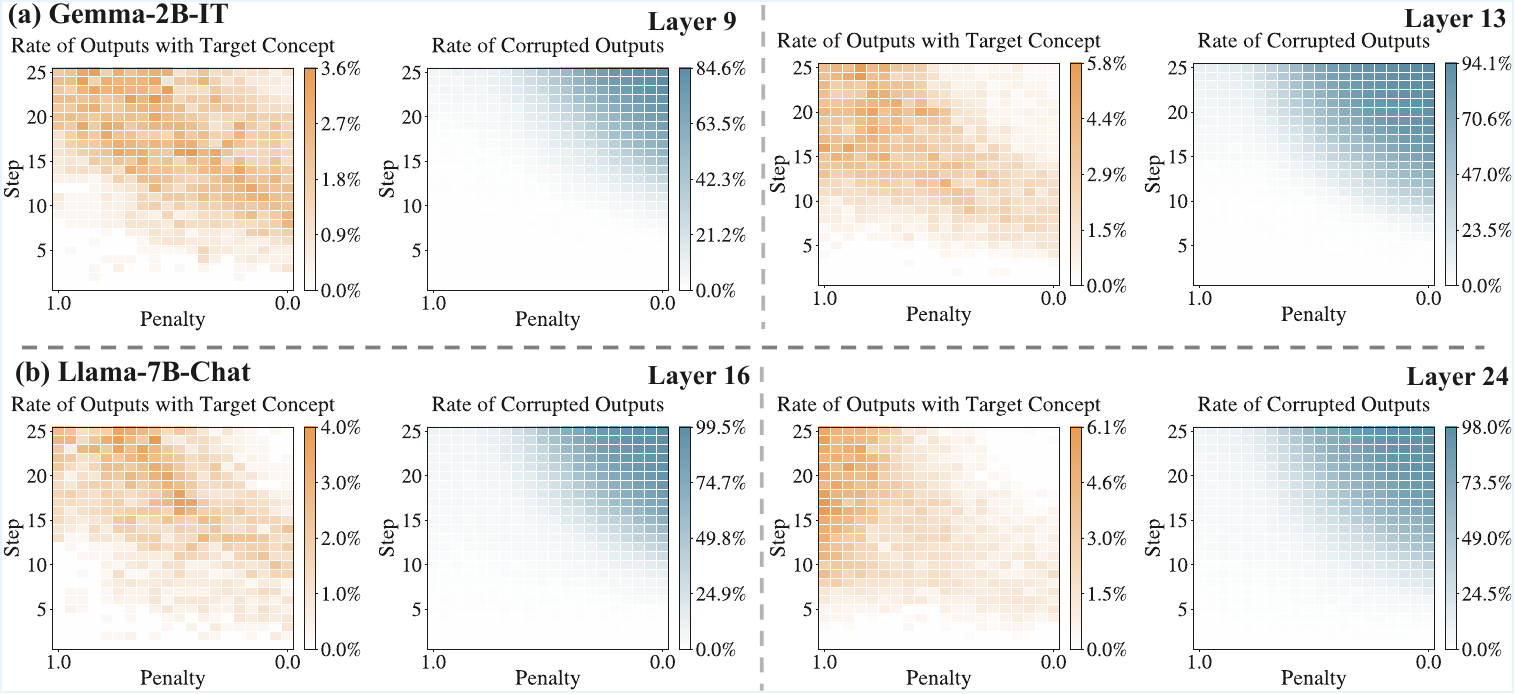}
    \caption{
    Effect of penalizing the normal-plane component on steering outcomes: (a) target concept activation and (b) output corruption, illustrating the trade-off predicted by CRH.
    Results for steering all prompt tokens.} 
\end{figure}

\vspace{3em}

\begin{figure}[H]
    \centering
    \includegraphics[width=\linewidth]{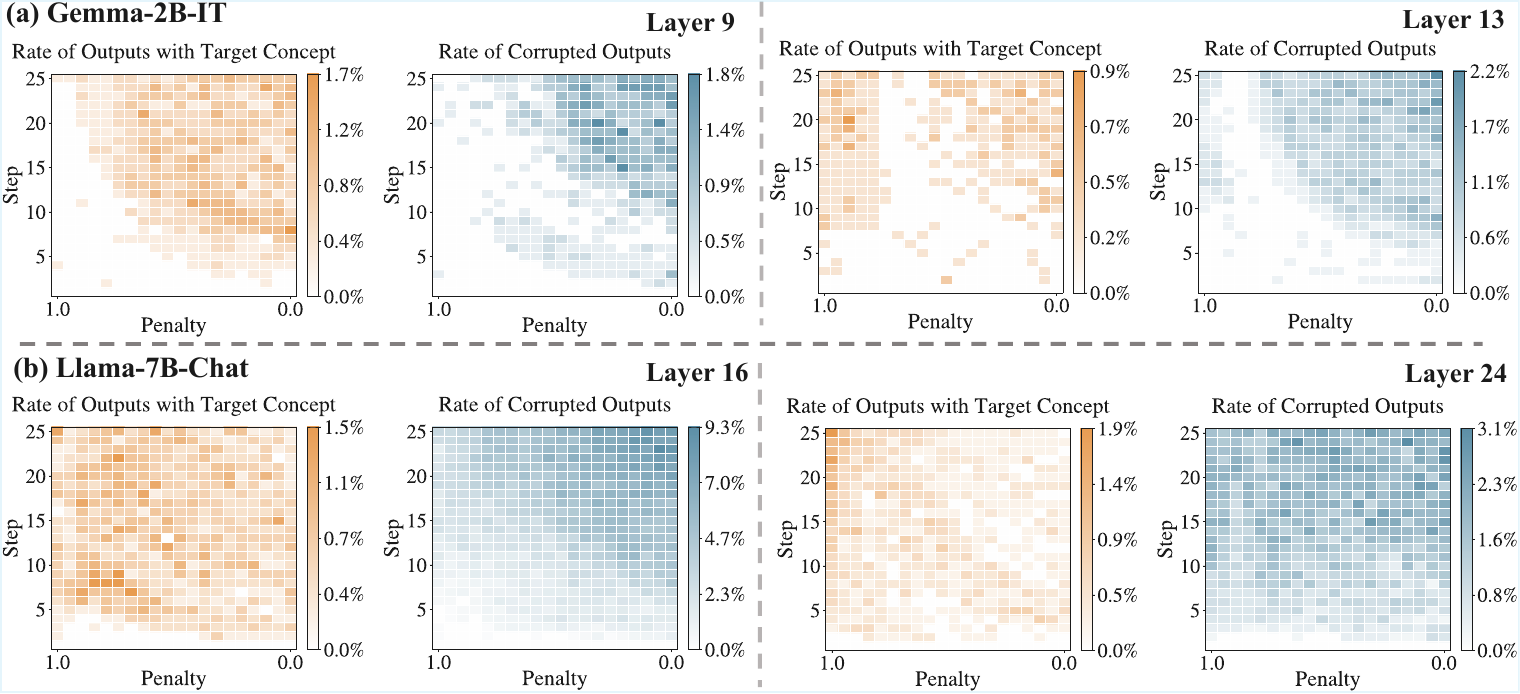}
    \caption{
    Effect of penalizing the normal-plane component on steering outcomes: (a) target concept activation and (b) output corruption, illustrating the trade-off predicted by CRH.
    Results for steering last prompt token only.} 
\end{figure}

\begin{figure}[h]
    \centering
    \includegraphics[width=\linewidth]{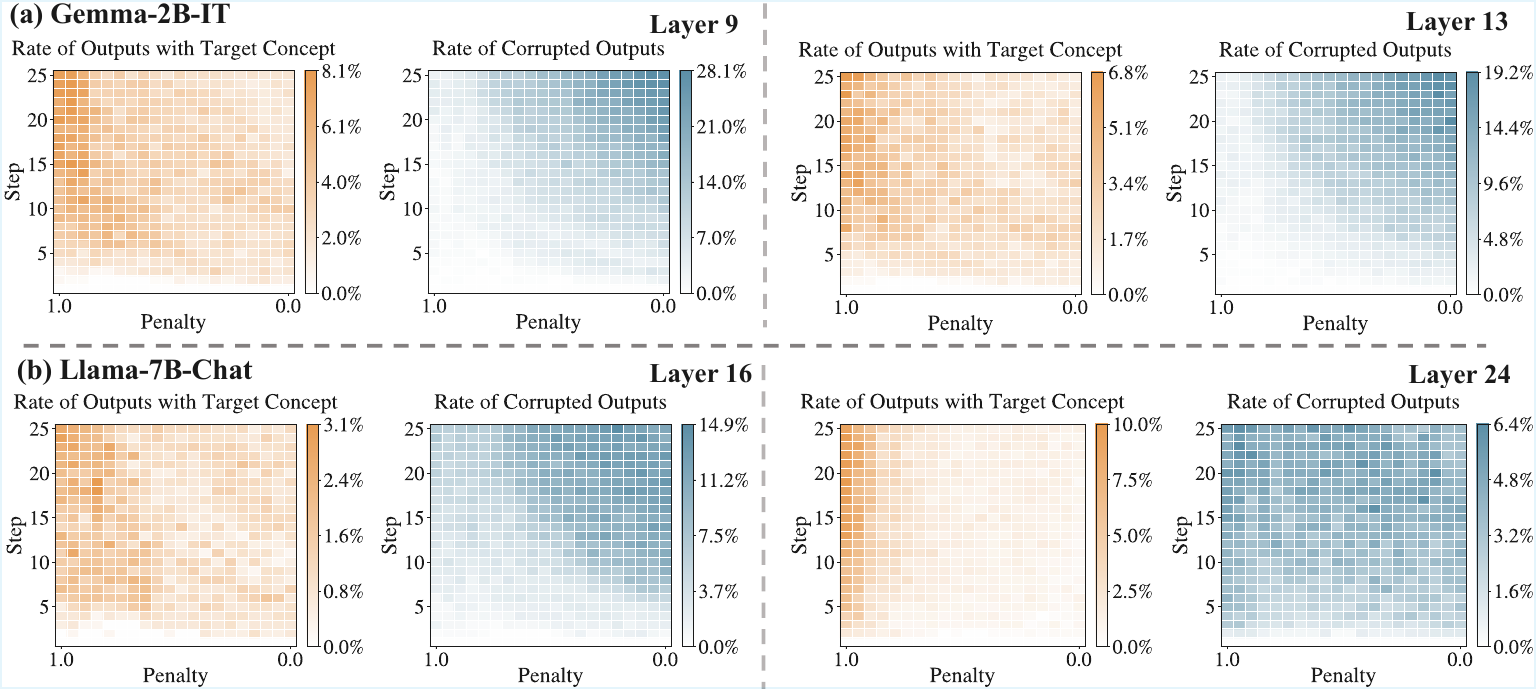}
    \caption{
    Effect of penalizing the normal-plane component on steering outcomes: (a) target concept activation and (b) output corruption, illustrating the trade-off predicted by CRH.
    Results for steering all output tokens.} 
\end{figure}

\begin{figure}[h]
    \centering
    \includegraphics[width=\linewidth]{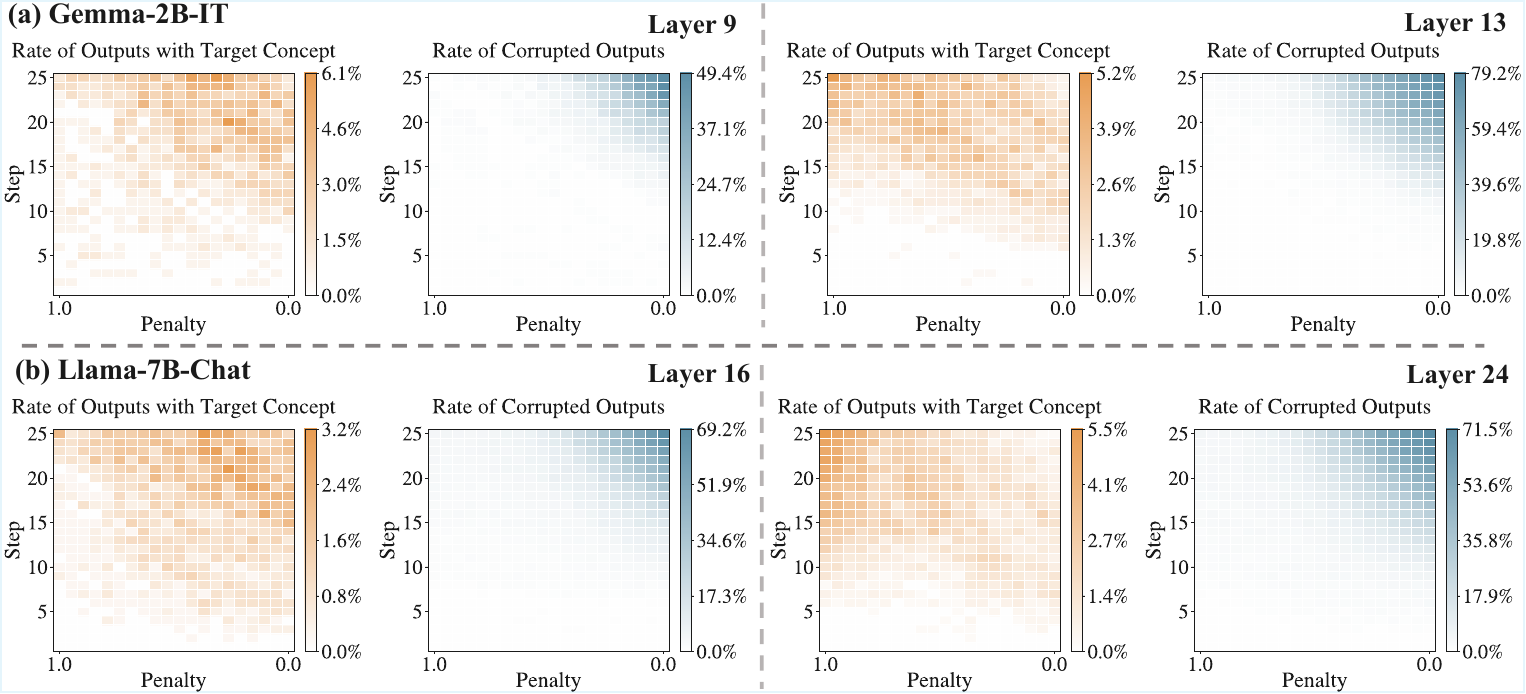}
    \caption{
    Effect of penalizing the normal-plane component on steering outcomes: (a) target concept activation and (b) output corruption, illustrating the trade-off predicted by CRH.
    Results for steering all tokens.} 
\end{figure}

\clearpage
\subsection{Full Results of Linear Predictability}
\label{app:full_impl2}

\vspace{5em}

\begin{figure}[h]
    \centering
    \includegraphics[width=\linewidth]{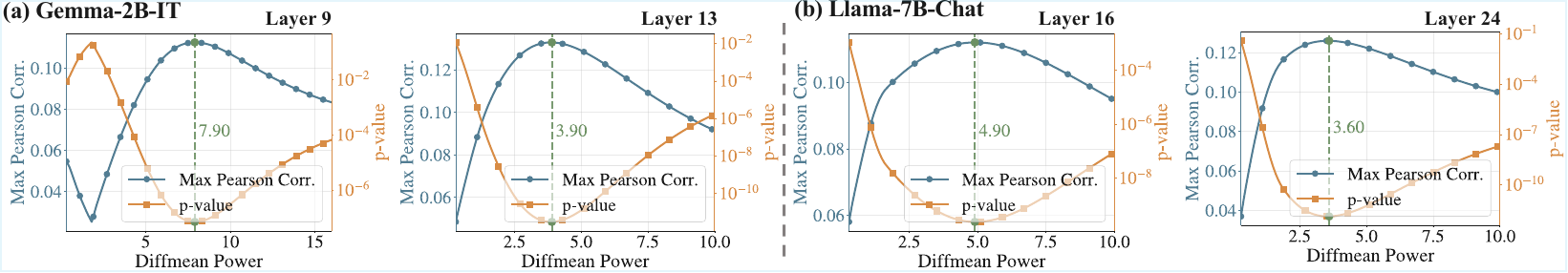}
    \caption{
    Linear predictability of DiffMean when steering all prompt tokens on (a) Gemma-2B-IT and (b) Llama2-7B-Chat.} 
    \vspace{2.5em}
    \includegraphics[width=\linewidth]{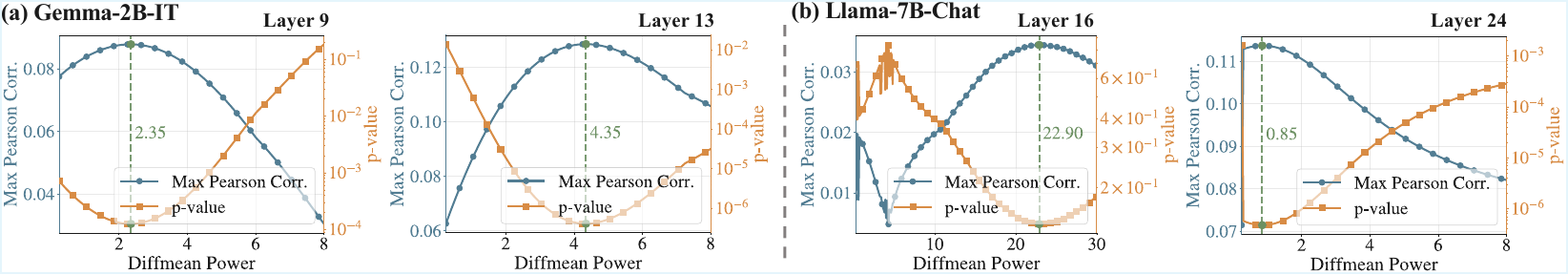}
    \caption{
    Linear predictability of PCA when steering all prompt tokens on (a) Gemma-2B-IT and (b) Llama2-7B-Chat.} 
    \vspace{2.5em}
    \includegraphics[width=\linewidth]{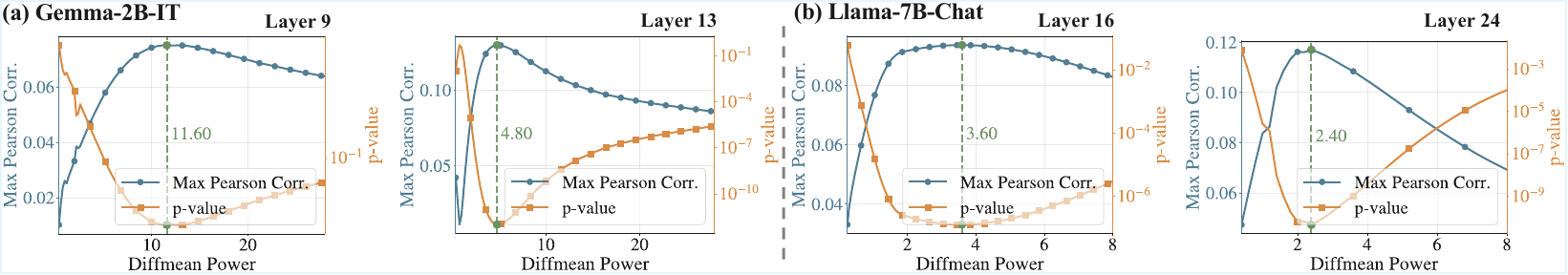}
    \caption{
    Linear Predictability of Mean-Centering when steering all prompt tokens on (a) Gemma-2B-IT and (b) Llama2-7B-Chat.} 
    \vspace{2.5em}
    
    \includegraphics[width=\linewidth]{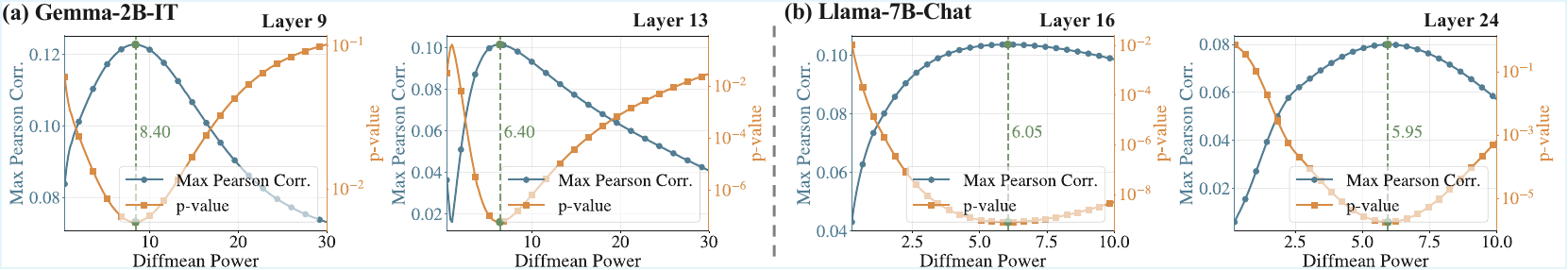}
    \caption{
    Linear Predictability of Probe when steering all prompt tokens on (a) Gemma-2B-IT and (b) Llama2-7B-Chat.} 
\end{figure}



\clearpage
\subsection{Full Results of Non-linear Predictability}
\label{app:full_impl3}

\vspace{4em}

\begin{figure}[H]
    \centering
    \includegraphics[width=\linewidth]{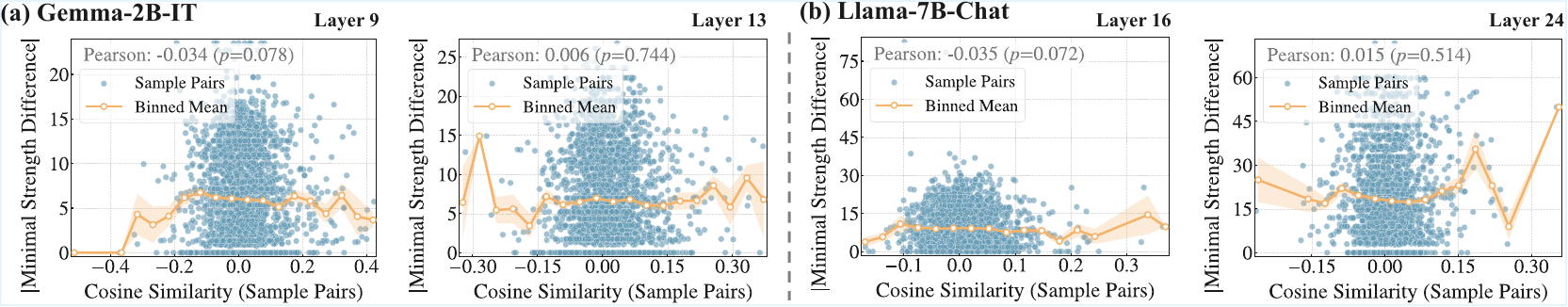}
    \caption{
    Non-linear predictability of DiffMean when steering all prompt tokens on (a) Gemma-2B-IT and (b) Llama2-7B-Chat.} 
    \vspace{2.5em}

    \includegraphics[width=\linewidth]{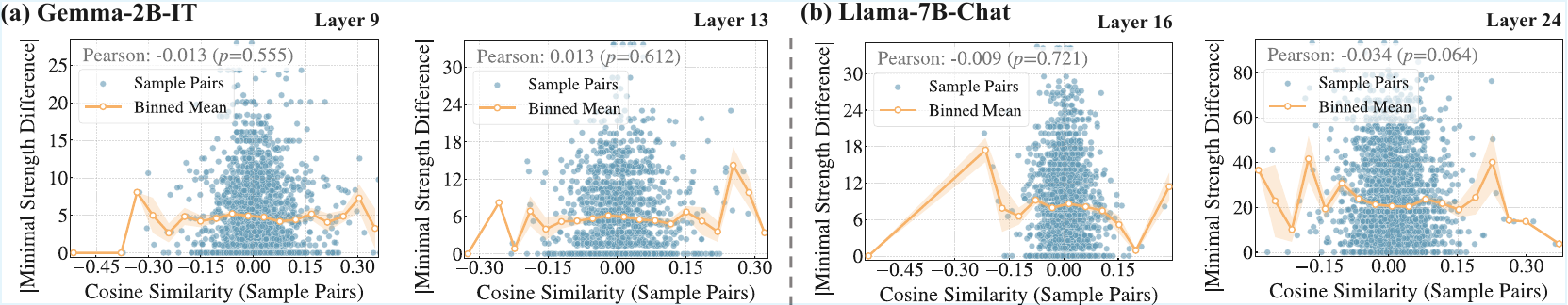}
    \caption{
    Non-linear predictability of PCA when steering all prompt tokens on (a) Gemma-2B-IT and (b) Llama2-7B-Chat.} 
    \vspace{2.5em}

    \includegraphics[width=\linewidth]{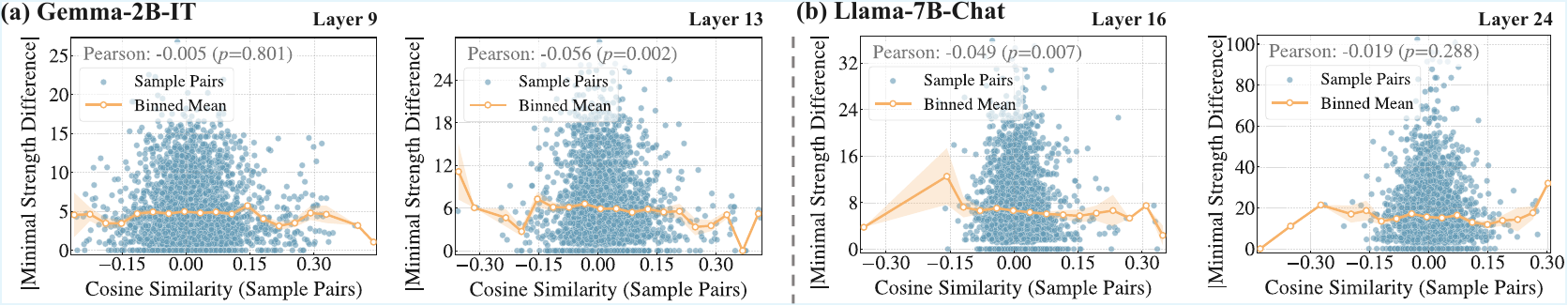}
    \caption{
    Non-linear Predictability of Mean-Centering when steering all prompt tokens on (a) Gemma-2B-IT and (b) Llama2-7B-Chat.} 
    \vspace{2.5em}

    \includegraphics[width=\linewidth]{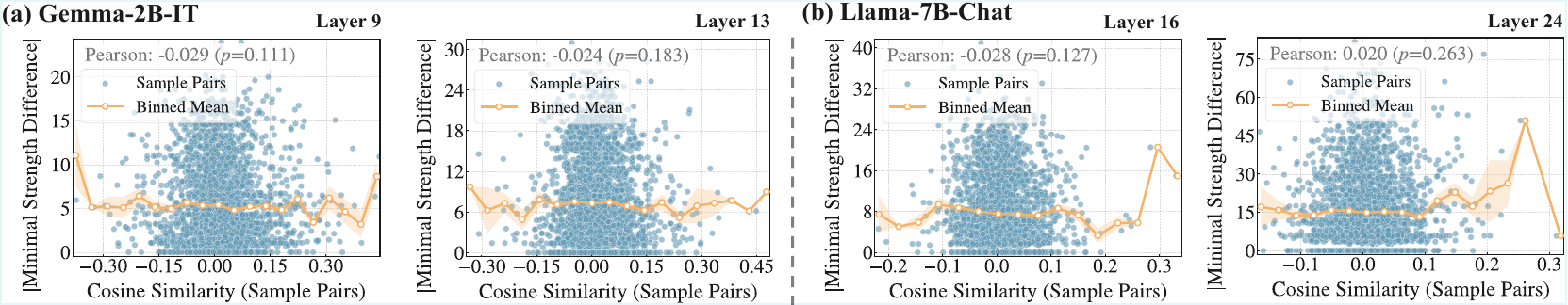}
    \caption{
    Non-linear Predictability of Probe when steering all prompt tokens on (a) Gemma-2B-IT and (b) Llama2-7B-Chat.} 
\end{figure}
\clearpage

\section{Supplementary Discussion}
\label{app:supp-discussion}

In this section, we report three additional studies. They give further
support to the probing analysis in Section~\ref{probing_crh} and the verification in
Section~\ref{sec:impl}. We first measure how much variance the cylindrical coordinate
system keeps (Appendix~\ref{app:pca-variance}). We then run a
random-direction control to check that the probed structure is real and
not an artifact of our pipeline (Appendix~\ref{app:null-control}).
Finally, we test all three implications on a larger model
(Appendix~\ref{app:add-exp}).

\subsection{Explained Variance of the Probing Geometry}
\label{app:pca-variance}

\textbf{Motivation.} The probing method in Section~\ref{probing_setup} builds a cylindrical coordinate system
from the leading PCA directions of the optimized steering vectors. We use
the first principal component as the axis and the next two components as
the normal plane. A natural question is whether these three directions
really capture the local geometry. If they keep only a small part of the
variance, the cylinder would be a weak summary of the sample.

\textbf{Setup.} We compute the explained variance of the PCA
decomposition used in the probing step. We run this analysis on
Gemma-2B-IT at layer~9, aggregated over 2451 cylinders across 99
concepts. Table~\ref{tab:pca-variance} reports the results.

\begin{table}[ht]
\caption{Explained variance of the PCA decomposition used to build the
cylindrical coordinate system. Results are on Gemma-2B-IT at layer~9,
aggregated over 2451 cylinders across 99 concepts. The first three
components together explain about 95\% of the variance.}
\label{tab:pca-variance}
\centering
\small
\begin{tabular}{lcccc}
\toprule
Component & Mean & Std & Min & Max \\
\midrule
PC1 & 0.7921 & 0.0320 & 0.6360 & 0.8810 \\
PC2 & 0.1191 & 0.0166 & 0.0757 & 0.2444 \\
PC3 & 0.0403 & 0.0080 & 0.0199 & 0.0894 \\
\midrule
Top-3 total & 0.9514 & 0.0152 & 0.8520 & 0.9828 \\
\bottomrule
\end{tabular}
\end{table}

\textbf{Result.} The first component alone explains about 79\% of the
variance. The first three components together explain about 95\%. These
ratios are stable, with small standard deviations across samples and
concepts. So the cylindrical coordinate system keeps most of the
optimized-vector variance, and the three chosen directions are a faithful
low-dimensional summary of the local geometry.

\subsection{Probing on a Random Cylinder}
\label{app:null-control}

\textbf{Motivation.} In our probing experiments, the cylindrical
structure is shown through the loss distribution after vector
optimization and dimensionality reduction. A natural question is whether
the regular loss pattern on the cylinder really comes from the optimized
vectors themselves. If it does not, then probing along a randomly chosen
axis and normal plane should also show a regular loss pattern. Based on
this idea, we design a probing experiment that steers on a random
cylinder.

\textbf{Setup.} We run a matched null control on Gemma-2B-IT at layer~9.
We use the same probing objective and the same norm budget as the main
probe. The only change is that we replace the PCA-derived cylinder
directions with random directions. We then compare the loss landscape
from the two settings. Table~\ref{tab:null-control} reports the
comparison.

\begin{table}[ht]
\caption{Matched random-direction control on Gemma-2B-IT at layer~9. We
report the mean and the standard deviation of the loss over the whole
cylinder, the range of the mean loss along the axis, and the range of the
mean loss across phases in the normal plane. }
\label{tab:null-control}
\centering
\small
\begin{tabular}{lcc}
\toprule
Metric & Optimized probe & Random probe \\
\midrule
Mean loss & 38.72 & 39.80 \\
Loss standard deviation & 3.43 & 1.16 \\
Axis-wise loss range & 6.05 & 1.59 \\
Phase-wise loss range & 0.51 & 0.07 \\
\bottomrule
\end{tabular}
\end{table}

\textbf{Result.} The optimized directions produce stronger axis and phase
structure, while the random directions produce a much flatter landscape.
The two settings have a similar mean loss, but evidently different in the
loss standard deviation. Besides, the optimized probe has a much larger loss range along the
axis and across phases, which shows a clear loss trend along the axis and
clear sensitive and non-sensitive sectors in the normal plane. The random
control has a small range in both directions, so its landscape is close to
uniform. Therefore, only semantically meaningful direction and normal plane can produce structured loss landscapes.


\subsection{Additional Verification}
\label{app:add-exp}

\textbf{Motivation.} In the main text, we use Gemma-2B-IT and
LLaMA2-7B-Chat for evaluation. To further verify the scope of CRH, we run the same evaluation as in Section~\ref{sec:impl} on a larger model.

\textbf{Setup.} We use Qwen2.5-14B-Instruct~\cite{qwen2025qwen25} as
the larger model. 
Following the setup described in Section~\ref{sec:exp_setup},
we adopt DiffMean as the steering method and evaluate layers located at roughly
one-third and two-thirds of the network depth, namely layers~24 and~36 of this model for evaluation. 
All other experimental settings are kept the same as those described in
Section~\ref{sec:exp_setup}.
The results are shown in Figure~\ref{fig:qwen}.

\textbf{Result.} The larger model gives the same qualitative behavior as
the smaller models. For Implication~1, we still see the trade-off between
earlier concept activation and earlier corruption. For Implication~2, the
correlation curve still has a single peak as the total exponent
increases. For Implication~3, we still find no significant correlation
between difference-vector similarity and steering similarity
(e.g., Pearson $=0.019$, $p=0.297$ at layer 24). These results match our findings on
Gemma-2B-IT and LLaMA2-7B-Chat in Section~\ref{sec:res}. Therefore, CRH is not limited to small scale models.

\begin{figure*}[t]
\centering
\setlength{\fboxsep}{1pt}%
\begin{subfigure}{1\textwidth}
  \centering
  \includegraphics[width=\linewidth]{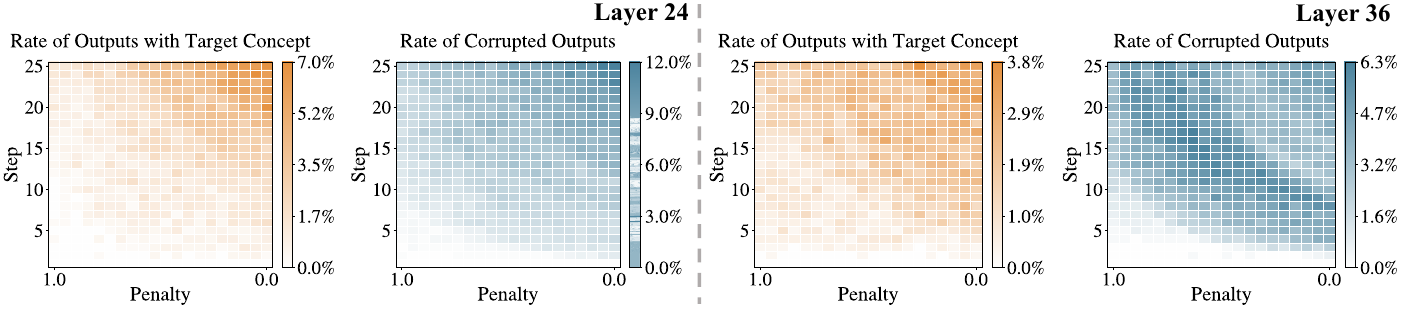}
  \caption{Implication 1: activation and corruption trade-off}
  \label{fig:qwen-imp1}
\end{subfigure}

\begin{subfigure}{0.49\textwidth}
  \centering
  \includegraphics[width=\linewidth]{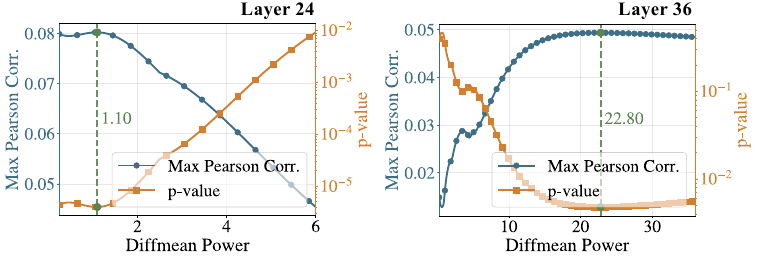}
  \caption{Implication 2: single-peak correlation curve}
  \label{fig:qwen-imp2}
\end{subfigure}
\hfill
\begin{subfigure}{0.49\textwidth}
  \centering
  \includegraphics[width=\linewidth]{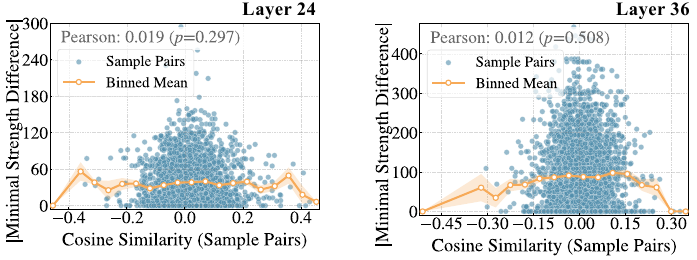}
  \caption{Implication 3: similarity scatter}
  \label{fig:qwen-imp3}
\end{subfigure}
\caption{Validation of the three implications on Qwen2.5-14B-Instruct. The larger
model shows the same patterns as Gemma-2B-IT and LLaMA2-7B-Chat. }
\label{fig:qwen}
\end{figure*}




\end{document}